\let\counterwithin\relax  %
\definecolor{dark-gray}{gray}{0.3}
\definecolor{dkgray}{rgb}{.4,.4,.4}
\definecolor{dkblue}{rgb}{0,0,.5}
\definecolor{medblue}{rgb}{0,0,.75}
\definecolor{rust}{rgb}{0.5,0.1,0.1}
\newtheoremstyle{myThm} %
    {\topsep}                    %
    {\topsep}                    %
    {\itshape}                   %
    {}                           %
    {\sffamily\bfseries}                   %
    {.}                          %
    {.5em}                       %
    {}  %
\newtheoremstyle{myRem} %
    {\topsep}                    %
    {\topsep}                    %
    {}                   %
    {}                           %
    {\sffamily}                   %
    {.}                          %
    {.5em}                       %
    {}  %
\newtheoremstyle{myDef} %
    {\topsep}                    %
    {\topsep}                    %
    {}                   %
    {}                           %
    {\sffamily\bfseries}                   %
    {.}                          %
    {.5em}                       %
    {}  %
\theoremstyle{myThm}
\newtheorem{theorem}{Theorem}[section]
\newtheorem{proposition}[theorem]{Proposition}
\theoremstyle{myRem}
\newtheorem{remark}[theorem]{Remark}
\theoremstyle{myDef}
\let\originalleft\left
\let\originalright\right
\renewcommand{\left}{\mathopen{}\mathclose\bgroup\originalleft}
\renewcommand{\right}{\aftergroup\egroup\originalright}
\definecolor{mygreen}{rgb}{0.1,0.75,0.2}
\newcommand{\nc}{\normalcolor}
\title{Enhancing Gaussian Process Surrogates for Optimization and Posterior Approximation via Random Exploration}   %
\author{Hwanwoo Kim and Daniel Sanz-Alonso}
\date{University of Chicago}
\makeatletter\@addtoreset{section}{part}\makeatother%
\numberwithin{equation}{section}
\newcommand{\upperRomannumeral}[1]{\uppercase\expandafter{\romannumeral#1}}
\begin{document}
\maketitle

\begin{abstract}
This paper proposes novel noise-free Bayesian optimization strategies that rely on a random exploration step to enhance the accuracy of Gaussian process surrogate models. The new algorithms retain the ease of implementation of the classical GP-UCB algorithm, but the additional random exploration step accelerates their convergence, nearly achieving the optimal convergence rate. Furthermore, to facilitate Bayesian inference with an intractable likelihood, we propose to utilize
optimization iterates for \emph{maximum a posteriori} estimation
to build a Gaussian process surrogate model for the unnormalized log-posterior density. We provide bounds for the Hellinger distance between the true and the approximate posterior distributions in terms of the number of design points.
We demonstrate the effectiveness of our Bayesian optimization algorithms in non-convex benchmark objective functions, in a machine learning hyperparameter tuning problem, and in a black-box engineering design problem. The effectiveness of our posterior approximation approach is demonstrated in two Bayesian inference problems for parameters of dynamical systems. 
\end{abstract}

\section{Introduction}
Gaussian processes are a powerful, non-parametric tool widely used in various fields of computational mathematics, machine learning, and statistics \cite{wendland2004scattered, williams2006gaussian, stein2012interpolation}. 
They provide a probabilistic approach to modeling complex functions in optimization and posterior approximation tasks \cite{frazier2018tutorial,stuart2018posterior}. 
In optimization,  Gaussian processes are employed to develop Bayesian optimization algorithms, which are particularly effective for global optimization of black-box or expensive-to-evaluate objective functions \cite{jones1998efficient, frazier2018tutorial}. 
In posterior approximation, Gaussian processes serve as surrogate models to approximate the posterior distribution
when dealing with intractable likelihoods \cite{ stuart2018posterior, cleary2021calibrate}. By offering a flexible probabilistic framework, Gaussian processes facilitate the development of sophisticated algorithms that can handle a wide range of challenging problems in scientific computing, engineering, machine learning, and beyond.

Bayesian optimization algorithms \cite{jones1998efficient, mockus1998application,frazier2018tutorial} sequentially acquire information on the objective by observing its value at carefully selected query points, balancing exploration and exploitation by leveraging uncertainty estimates provided by a Gaussian process surrogate model of the objective. In some applications, the observations of the objective are noisy, but in many others the objective can be noiselessly observed; examples include hyperparameter tuning for machine learning algorithms \cite{burges1996improving}, parameter estimation for computer models \cite{clark2016engineering, pourmohamad2020compmodels}, goal-driven dynamics learning \cite{bansal2017goal}, and alignment of density maps in Wasserstein distance \cite{singer2023alignment}. 
While most Bayesian optimization algorithms can be implemented with either noisy or noise-free observations, few methods and theoretical analyses are tailored to the noise-free setting. 

This paper introduces two new algorithms for Bayesian optimization with noise-free observations. The first algorithm, which we call GP-UCB+, supplements query points obtained via the classical GP-UCB algorithm \cite{srinivas2009gaussian} with randomly sampled query points. The second algorithm, which we call EXPLOIT+, supplements query points obtained by maximizing the posterior mean of a Gaussian process surrogate model with randomly sampled query points. Both algorithms retain the simplicity and ease of implementation of the GP-UCB algorithm, but introduce an additional random exploration step to ensure that the fill-distance of query points decays at a near-optimal rate, thus enhancing the accuracy of surrogate models for the objective function. The new random exploration step has a profound impact on both theoretical guarantees and empirical performance. On the one hand, the convergence rate of GP-UCB+ and EXPLOIT+ improves upon existing and refined rates for the GP-UCB algorithm. Indeed, the new algorithms nearly achieve the optimal convergence rate in \cite{bull2011convergence}. On the other hand, GP-UCB+ and EXPLOIT+ explore the state space faster, which leads to an improvement in numerical performance across a range of benchmark and real-world examples. 

Finally, this paper explores using Bayesian optimization iterates for \emph{maximum a posteriori} (MAP) estimation as an experimental design tool to build a Gaussian process surrogate model for the unnormalized log-posterior density. The proposed optimization algorithms then provide a systematic way to build a surrogate model that accurately reflects the \emph{local} behavior of the true posterior around its mode, without sacrificing the depiction of its \emph{global} behavior. The effectiveness of these strategies is theoretically supported by a convergence rate analysis and numerically supported by several examples where the likelihood function involves differential equations and cannot be evaluated in closed form.

\subsection{Main Contributions}
\begin{itemize}
    \item We introduce two new algorithms, GP-UCB+ and EXPLOIT+, whose convergence rate (simple regret bound) established in Theorem \ref{thm:CumulativeRegretBounds} nearly matches the optimal convergence rate in \cite{bull2011convergence}. The proposed algorithms far improve existing and refined rates for the classical GP-UCB algorithm and its variants. En route to studying the convergence rate for our new algorithms, we establish in Theorem \ref{UCB_cum_reg_bd} a cumulative regret bound for GP-UCB with squared exponential kernels that refines the one in \cite{lyu2019efficient}. 
    \item  We numerically demonstrate that GP-UCB+ and EXPLOIT+ outperform GP-UCB and other popular Bayesian optimization algorithms across many examples, including optimization of several $10$-dimensional benchmark objective functions, hyperparameter tuning for random forests, and optimal parameter estimation of a garden sprinkler computer model.  
    \item We showcase that both GP-UCB+ and EXPLOIT+ share the simplicity and ease of implementation of the GP-UCB algorithm. In addition, EXPLOIT+ requires fewer input parameters than GP-UCB or GP-UCB+, and achieves competitive empirical performance without any tuning.
    \item In Theorem  \ref{thm:pos_approx_log_unnorm},  we bound the Hellinger distance between the true posterior and surrogate posteriors that use design points obtained with GP-UCB+ and EXPLOIT+ algorithms. We numerically demonstrate the effectiveness of GP-UCB+ and EXPLOIT+ algorithms in building Gaussian process surrogates to facilitate approximate Bayesian inference for parameters of differential equations. 
\end{itemize}

\subsection{Outline}
Section \ref{sec:Background} formalizes the optimization problem of interest and provides necessary background. We review related work in
Section \ref{sec:relatedwork}. Our new Bayesian optimization algorithms are introduced in Section \ref{Sec:Decouple}, where we establish regret bounds under a deterministic assumption on the objective function.
Section \ref{sec:postapproxandsampling} utilizes iterates from the proposed optimization strategies as design points to build a Gaussian process surrogate model and facilitate computationally efficient Bayesian inference with intractable or expensive-to-evaluate likelihood functions. 
Section \ref{sec:numerics} contains numerical examples, and we close in Section \ref{sec:conclusions}.
Proofs and additional numerical experiments are deferred to an appendix.

\section{Preliminaries}%
\label{sec:Background}
\subsection{Problem Statement}
We want to find the global maximizer of an objective function $f:\mathcal{X}\to \mathbb{R}$ by leveraging the observed values of $f$ at carefully chosen query points. We are interested in the setting where the observations of the objective are noise-free, i.e. for query points $X_t = \{x_1, \ldots, x_t\}$ we can access observations $F_t = [f(x_1), \ldots, f(x_t)]^\top.$ The functional form of $f$ is not assumed to be known. For simplicity, we assume throughout that $\mathcal{X}\subset \mathbb{R}^d$ is a $d$-dimensional hypercube. We assume that $f \in \mathcal{H}_{k}(\mathcal{X})$ belongs to the Reproducing Kernel Hilbert Space (RKHS) associated with a kernel $k: \mathcal{X}\times \mathcal{X} \to \mathbb{R}$.

\subsection{Gaussian Processes and Bayesian Optimization}
\label{subsec: GP}
Many Bayesian optimization algorithms, including the ones introduced in this paper, rely on a  Gaussian process surrogate model of the objective function to guide the choice of query points. Here, we review the main ideas.
Denote generic query locations by $X_t = \{x_1, \ldots, x_t\}$ and the corresponding noise-free observations by $F_t = [f(x_1), \ldots, f(x_t)]^\top.$
 Gaussian process interpolation with a prior $\mathcal{GP}(0, k)$ yields the following posterior predictive mean and variance:
\begin{align*}
   \mu_{t,0}(x) &= k_t(x)^\top K_{tt}^{-1} F_t, \\
   \sigma_{t,0}^2(x) &= k(x,x) - k_t(x)^\top K_{tt}^{-1} k_t(x),
\end{align*}
where $k_t(x) = [k(x, x_1), \ldots, k(x, x_t)]^\top$ and $K_{tt}$ is a $t\times t$ matrix with entries $(K_{tt})_{i,j} = k(x_i, x_j)$.

Our interest lies in Bayesian optimization with noise-free observations. However, we recall for later reference that if the observations are noisy and take the form $y_i = f(x_i) + \eta_i,$  $1 \le i \le t,$  where $\eta_i \stackrel{\text{i.i.d.}}{\sim} N(0,\lambda),$ then the posterior predictive mean and variance are given by
\begin{align*}
   \mu_{t,\lambda}(x) &= k_t(x)^\top \left(K_{tt} + \lambda I\right)^{-1} Y_t, \\
   \sigma_{t,\lambda}^2(x) &= k(x,x) - k_t(x)^\top (K_{tt}+\lambda I)^{-1} k_t(x),
\end{align*}
where $Y_t = [y_1, \ldots, y_t]^\top.$

To perform Bayesian optimization, one can sequentially select query points by optimizing a Gaussian Process Upper Confidence Bound (GP-UCB) acquisition function. Let $X_{t-1} = \{x_1, \ldots, x_{t-1}\}$ denote the query points at the $(t-1)$-th iteration of the algorithm.
Then, at the $t$-th iteration, the classical GP-UCB algorithm \cite{srinivas2009gaussian} sets
\begin{align}\label{eq:GP-UCB}
    x_{t} = \arg \max_{x \in \mathcal{X}} \mu_{t-1,\lambda}(x) + \beta_{t}^{\frac12} \sigma_{t-1,\lambda}(x),
\end{align}
where $\beta_{t}$ is a user-chosen positive parameter. The posterior predictive mean provides a surrogate model for the objective; hence, one expects the maximum of $f$ to be achieved at a point $x\in \mathcal{X}$ where $\mu_{t-1,\lambda}(x)$ is large. However, the surrogate model $\mu_{t-1,\lambda}(x)$ may not be accurate at points $x\in \mathcal{X}$ where $\sigma_{t-1,\lambda}^2(x)$ is large, and selecting query points with large predictive variance helps improve the accuracy of the surrogate model. The GP-UCB algorithm finds a compromise between exploitation (maximizing the mean) and exploration (maximizing the variance). The weight parameter $\beta_t$ balances this exploitation-exploration trade-off. For later discussion, Algorithm \ref{noiseless_BO} below summarizes the approach with noise-free observations $F_t$ and $\lambda =0.$

\begin{algorithm}[htp]
\caption{GP-UCB with noise-free observations. \label{noiseless_BO}} 
\begin{algorithmic}[1]
    \STATE {\bf Input}: Kernel $k;$ Total number of iterations $T;$ Initial design points $X_0;$ Initial noise-free observations $F_0$; Weights $\{\beta_t\}_{t=1}^{T}.$
    \STATE Construct $\mu_{0,0}(x)$ and $\sigma_{0,0}(x)$ using $X_0$ and $F_0$.
    \STATE {\bf For} $t = 1, \ldots, T$ {\bf do}: 
    \begin{enumerate}
    \item \vspace{-0.25cm} Set
    \setlength{\belowdisplayskip}{10pt} \setlength{\belowdisplayshortskip}{5pt}
\setlength{\abovedisplayskip}{0pt} \setlength{\abovedisplayshortskip}{0pt}
\[ x_t = \arg\max_{x \in \mathcal{X}} \mu_{t-1,0}(x) +\beta_{t}^{\frac12} \sigma_{t-1,0}(x).\]
    \item \vspace{-0.35cm} Set $X_t = X_{t-1} \cup \{
x_t\}$ and $F_t = F_{t-1} \cup \{
f(x_t)\}$.
    \item \vspace{-0.25cm} Update $\mu_{t,0}(x)$ and $\sigma_{t,0}(x)$ using $X_t$ and $F_t.$
    \end{enumerate}
\STATE \vspace{-0.25cm} {\bf Output}: optimization iterates $\{x_1, x_2, \ldots, x_T\}.$ 
\end{algorithmic}
\end{algorithm}

\subsection{Performance Metric}
\label{subsec:Eval_Metric}
The performance of Bayesian optimization algorithms is often analyzed through bounds on their \emph{simple regret}, $S_T$, given by
\begin{equation*}
S_T = f^* - \max_{t=1,\ldots, T} f(x_t),
\end{equation*}
or their \emph{cumulative regret}, $R_T,$ given by 
\begin{equation*}
    R_T = \sum_{t=1}^T r_t, \qquad r_t = f^* - f(x_t),
\end{equation*}
where $f^*$ is the maximum of the objective $f,$ $x_t$ is the $t$-th iterate of the optimization algorithm, and $r_t$ is called the \emph{instantaneous regret.} Notice that from the definition, we observe $S_T \le \frac{R_T}{T}$ and $S_T \le r_T$. Hence, an upper bound on $\frac{R_T}{T}$ or $r_T$ serves as an upper bound for $S_T.$ The convergence of an algorithm (in the sense of recovering the maximum of the objective) is implied by showing that $S_T \to 0$ as $T\to \infty$.

Several convergence results have been established directly through bounding the simple regret or the instantaneous regret \citep{bull2011convergence, de2012exponential}. 
On the other hand, the goal of theoretical analyses of optimization algorithms based on the cumulative regret $R_T$ is to show sublinear growth rate of $R_T$ to ensure convergence to the global maximum. 
A bound on the rate of convergence (with respect to simple regret) is then given by the decaying rate of $R_T/T$. In this context, $R_T$ serves as a useful intermediate quantity to establish convergence rates for popular Bayesian optimization strategies including algorithms based on upper confidence bounds \cite{srinivas2009gaussian, chowdhury2017kernelized, bogunovic2021misspecified} and Thompson sampling \cite{chowdhury2017kernelized}. 
As noted in \citep{bull2011convergence}, a caveat of using $R_T$ to analyze Bayesian optimization algorithms is that the fastest rate of convergence one can obtain is $T^{-1}.$ In addition, $R_T$ accounts for costs that are not incurred by the optimization algorithm. 
For these reasons, we analyze our new algorithms using simple and instantaneous regret, and additionally compare our simple regret bounds with those implied by existing bounds on $R_T$.

\subsection{Choice of Kernel}
We will consider the well-specified setting where Gaussian process interpolation for surrogate modeling is implemented using the same kernel $k$ which specifies the deterministic assumption on $f,$ namely $f \in \mathcal{H}_k$. The impact of kernel misspecification on Bayesian optimization algorithms is studied in \cite{bogunovic2021misspecified,kim2022optimization}. 

For concreteness, we focus on  \emph{Mat\'ern kernels} with smoothness parameter $\nu$ and lengthscale parameter $\ell$, given by
$$
k(x, x') = \frac{1}{\Gamma(\nu)2^{\nu-1}}\left(\frac{\sqrt{2\nu}\|x-x'\|}{\ell}\right)^\nu B_\nu \left(\frac{\sqrt{2\nu}\|x-x'\|}{\ell}\right),
$$
where $B_\nu$ is a modified Bessel function of the second kind, and on \emph{squared exponential kernels} with lengthscale parameter $\ell$, given by
$$
k(x, x') = \exp \left(-\frac{\|x-x'\|^2}{2 \ell^2}\right).
$$
We recall that the Matérn kernel converges to the squared exponential kernel in the large $\nu$ asymptotic. Both types of kernel are widely used in practice, and we refer to \cite{williams2006gaussian,wendland2004scattered, stein2012interpolation} for further background. 

\section{Related Work}\label{sec:relatedwork}
\subsection{Existing Regret Bounds: Noisy Observations}
\label{subsec:Existing_Bounds}
Numerous works have established cumulative regret bounds for Bayesian optimization with noisy observations under both deterministic assumption on the objective function \cite{srinivas2009gaussian,chowdhury2017kernelized, vakili2021information, bogunovic2021misspecified,  russo2014learning, kandasamy2018parallelised}. These bounds involve a
quantity known as the \emph{maximum information gain}, which under a Gaussian noise assumption is given by $\gamma_t = \frac{1}{2}\log |I + \lambda^{-1}K_tt|, $ where $\lambda>0$ represents the noise level.  In particular, under a deterministic objective function assumption, \cite{chowdhury2017kernelized} showed a cumulative regret bound for GP-UCB of the form $\mathcal{O}\left(\gamma_T \sqrt{T } \right)$, which improves the one obtained in \cite{srinivas2009gaussian} by a factor of $\mathcal{O}\bigl(\log^{3/2}(T)\bigr)$. By tightening existing upper bounds on the maximum information gain, \cite{vakili2021information} established a cumulative regret bound for GP-UCB of the form 
$$
R_T = \begin{cases}
    \mathcal{O}\left(T^{\frac{2\nu+3d}{4\nu+2d}}\log^{\frac{2\nu}{2\nu + d}}T\right),\\
    \mathcal{O}\left(T^{\frac{1}{2}}\log^{d+1} T\right), 
\end{cases}
$$
for Matérn and squared exponential kernels.

\subsection{Existing Regret Bounds: Noise-Free Observations}
In contrast to the noisy setting, few works have obtained regret bounds with noise-free observations. With an expected improvement acquisition function and Matérn kernel, \cite{bull2011convergence} provided a simple regret bound of the form $\tilde{\mathcal{O}}\left(T^{-\min\{\nu,1\}/d}\right)$, where $\tilde{\mathcal{O}}$ suppresses logarithmic factors, under deterministic objective function assumption. On the other hand, \cite{de2012exponential} introduced a branch and bound algorithm that achieves an exponential rate of convergence for the instantaneous regret, under the probabilistic assumption on the objective function. However, unlike the standard GP-UCB algorithm,
the algorithm in \cite{de2012exponential} requires many observations in each iteration to reduce the search space, and it further requires solving a constrained optimization problem in the reduced search space.

To the best of our knowledge, 
\cite{lyu2019efficient} presents the only cumulative regret bound available for GP-UCB with noise-free observations under a deterministic assumption on the objective. Specifically, they consider Algorithm \ref{noiseless_BO}, and, noticing that $\sigma_{t,0}(x) \le \sigma_{t,\lambda}(x)$ for any $\lambda \ge 0$, they deduce that existing cumulative regret bounds for Bayesian optimization with noisy observations remain valid with noise-free observations. Furthermore, in the noise-free setting, the cumulative regret bound is improved by a factor of $\sqrt{\gamma_T}$, which comes from using a constant weight parameter $\beta_t := \|f\|_{\mathcal{H}_k(\mathcal{X})}^2$ given by the squared RKHS norm of the objective. This leads to a cumulative regret bound with rate $\mathcal{O}(\sqrt{\gamma_T T})$, which gives  
\begin{align}\label{eq:boundLyu}
R_T = \begin{cases}
    \mathcal{O}\left(T^{\frac{\nu+d}{2\nu+d}}\log^{\frac{\nu}{2\nu + d}}T\right),\\
    \mathcal{O}\left(T^{\frac{1}{2}}\log^{\frac{d+1}{2}} T\right), 
\end{cases}
\end{align}
for Matérn and squared exponential kernels. 

\subsection{Tighter Cumulative Regret Bound for Squared Exponential Kernels} \cite{vakili2022open} sets as an open problem whether one can improve
the cumulative regret bounds in \eqref{eq:boundLyu} for the GP-UCB algorithm with noise-free observations. For squared exponential kernels, we claim that one can further improve the cumulative regret bound in \eqref{eq:boundLyu} by a factor of $\sqrt{\log T}$.
\begin{theorem}\label{UCB_cum_reg_bd}
Let $f \in \mathcal{H}_k(\mathcal{X})$, where $k$ is a squared exponential kernel. GP-UCB with noise-free observations and $\beta_t := \|f\|_{\mathcal{H}_k}^2$  satisfies the cumulative regret bound 
$$
R_T = \mathcal{O}\left(T^{\frac{1}{2}}\log^{\frac{d}{2}}  T \right),
$$
which yields the convergence rate of
$$
S_T = \mathcal{O}\left(T^{-\frac{1}{2}}\log^{\frac{d}{2}} T \right).
$$
\end{theorem}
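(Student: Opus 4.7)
The plan is to follow the GP-UCB regret skeleton used to prove \eqref{eq:boundLyu}, but to make a single targeted modification: instead of treating the regularization parameter $\lambda$ that appears in the log-to-linear self-normalizing bridge as a $T$-independent constant, I would let it shrink polynomially with $T$. This single change is what produces the missing $\sqrt{\log T}$ factor.

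I would first establish the per-round inequality $r_t \le 2\|f\|_{\mathcal{H}_k}\sigma_{t-1,0}(x_t)$ in the standard way, by combining the RKHS confidence bound $|f(x)-\mu_{t-1,0}(x)| \le \|f\|_{\mathcal{H}_k}\sigma_{t-1,0}(x)$, applied at both $x^*$ and $x_t$, with the GP-UCB selection rule and the choice $\beta_t = \|f\|_{\mathcal{H}_k}^2$. Cauchy--Schwarz then yields
\[
R_T \le 2\|f\|_{\mathcal{H}_k}\sqrt{T\sum_{t=1}^T \sigma_{t-1,0}^2(x_t)},
\]
so the task reduces to controlling the sum of squared noise-free predictive variances.

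For any $\lambda>0$ I would use the pointwise monotonicity $\sigma_{t-1,0}^2(x) \le \sigma_{t-1,\lambda}^2(x)$ together with the elementary inequality $z \le \tfrac{M}{\log(1+M)}\log(1+z)$ on $[0,M]$, taken with $z=\lambda^{-1}\sigma_{t-1,\lambda}^2(x_t)$ and $M=\lambda^{-1}k_{\max}$, and then apply the log-determinant telescoping identity $\sum_t \log(1+\lambda^{-1}\sigma_{t-1,\lambda}^2(x_t)) = \log\det(I+\lambda^{-1}K_{TT}) \le 2\gamma_T(\lambda)$ to obtain
\[
\sum_{t=1}^T \sigma_{t-1,0}^2(x_t) \;\le\; \frac{2\, k_{\max}\,\gamma_T(\lambda)}{\log(1+\lambda^{-1}k_{\max})}.
\]
The crux is then to pick $\lambda = T^{-a}$ for some fixed $a>0$. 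For the squared exponential kernel the Mercer eigenvalues decay as $\rho^{j^{1/d}}$ with $\rho<1$, which via the standard truncation argument underlying the Vakili et al.\ bound gives $\gamma_T(\lambda) = \mathcal{O}\bigl(\log^{d+1}(T/\lambda)\bigr)$. With $\lambda=T^{-a}$ the numerator remains $\mathcal{O}(\log^{d+1} T)$ while the denominator equals $\Theta(\log T)$, so the ratio collapses to $\mathcal{O}(\log^d T)$. Substituting into the Cauchy--Schwarz bound yields $R_T = \mathcal{O}(T^{1/2}\log^{d/2} T)$, and the simple regret rate follows from $S_T \le R_T/T$.

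The main obstacle is the $\lambda$-dependence of the maximum information gain: one must verify that allowing $\lambda$ to shrink polynomially with $T$ inflates $\gamma_T(\lambda)$ only by a constant factor relative to the fixed-$\lambda$ regime, since only then does the extra $\log T$ picked up from $\log(1+\lambda^{-1}k_{\max})$ translate into a genuine saving. All remaining ingredients, including the RKHS confidence bound, Cauchy--Schwarz, the log-determinant identity, and the self-normalizing log inequality, are already present in the proof of \eqref{eq:boundLyu} and need no modification.
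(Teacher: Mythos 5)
Your proposal is correct and follows essentially the same route as the paper's proof: the per-round bound $r_t \le 2\|f\|_{\mathcal{H}_k}\sigma_{t-1,0}(x_t)$, Cauchy--Schwarz, the monotonicity $\sigma_{t-1,0}\le\sigma_{t-1,\lambda}$ combined with the self-normalizing log inequality carrying the crucial $1/\log(1+\lambda^{-1})$ prefactor, the choice $\lambda = T^{-\alpha}$, and the explicit $\lambda$-dependence of the squared exponential information gain (the paper invokes Corollary 1 of Vakili et al., which confirms your claim that shrinking $\lambda$ polynomially inflates $\gamma_T(\lambda)$ only by a constant factor). No gaps.
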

\begin{remark}
Our improvement in the bound comes from a constant term $\frac{1}{\log(1+\lambda^{-1})}$, which was ignored in existing analyses with noisy observations. By letting $\lambda \to 0$, the constant offsets a $\sqrt{\log T}$ growth in the cumulative regret bound.
\end{remark}

\begin{remark}
For Matérn kernels, a similar approach to improve the rate is not feasible. A state-of-the-art, near-optimal upper bound on the maximum information gain with Matérn kernels obtained in \cite{vakili2021information} introduces a polynomial growth factor as the noise variance $\lambda$ decreases to zero. Minimizing the rate of an upper bound in \cite{vakili2021information} one can match the rate obtained in \cite{lyu2019efficient}. 
\end{remark}

\begin{remark}
Unlike cumulative regret bounds with noisy observations, Theorem \ref{UCB_cum_reg_bd} and the results in \cite{lyu2019efficient} are deterministic. 
\end{remark}

\subsection{Optimal Simple Regret Bounds}
Under the deterministic objective function assumption, Theorem \ref{UCB_cum_reg_bd} refines the rate bound in \eqref{eq:boundLyu} for GP-UCB with noise-free observations using squared exponential kernels. In the rest of the paper, we will design new algorithms that achieve drastically faster convergence rates. In particular, for Matérn kernels, our algorithms nearly achieve the optimal convergence rate in \citep{bull2011convergence} of the form 
$
S_T = \Theta \left(T^{-\frac{\nu}{d}} \right).
$
Furthermore, our algorithms' convergence rate is competitive to algorithms that satisfy the conjectured cumulative regret bound in \cite{vakili2022open} of the form
$$
R_T = \begin{cases}
    \mathcal{O}(T^{\frac{d-\nu}{d}}), & \text{for} ~ d > \nu, \\
    \mathcal{O}(\log T), & \text{for} ~ d = \nu,  \\
    \mathcal{O}(1), & \text{for} ~ d < \nu, 
\end{cases}
$$
which translates to the convergence rate of
$$
S_T = \begin{cases}
    \mathcal{O}\left(T^{-\frac{\nu}{d}}\right), & \text{for} ~ d > \nu, \\
    \mathcal{O}\left(T^{-1}\log T\right), & \text{for} ~ d = \nu, \\ 
    \mathcal{O}\left(T^{-1}\right), & \text{for} ~ d < \nu.
\end{cases}
$$
Our new algorithms nearly achieve the optimal convergence rate while preserving the ease of implementation of GP-UCB algorithms. The recent preprint \cite{salgia2023random} proposes an alternative batch-based approach, which combines random sampling with domain shrinking to attain the conjectured cumulative regret bounds for $d \ge \nu$ with a high probability.

\section{Exploitation with Accelerated Exploration} 
\label{Sec:Decouple}%
 \subsection{How Well Does GP-UCB Explore?}\label{ssec:howwell}
 The GP-UCB algorithm selects query points by optimizing an acquisition function which incorporates the posterior mean to promote exploitation and the posterior standard deviation to promote exploration. Our new algorithms are inspired by the desire to improve the exploration of GP-UCB. Before introducing the algorithms in the next subsection, we heuristically explain why such an improvement may be possible.

A natural way to quantify how well data $X_t = \{x_1, \ldots, x_t\}$ cover the search space $\mathcal{X}$ is via the \emph{fill-distance}, given by
$$
h(\mathcal{X}, X_t) \coloneqq \sup_{x \in \mathcal{X}} \inf_{x_i \in X_t} \|x-x_i\|.
$$
The fill-distance appears in error bounds for Gaussian process interpolation and regression \cite{wendland2004scattered,teckentrup2020convergence, stuart2018posterior, tuo2020kriging}. For quasi-uniform points, it holds that $
h(\mathcal{X}, X_t) = \Theta\left(t^{-\frac{1}{d}}\right)$, which is the fastest possible decay rate for any sequence of design points. The fill-distance of the query points selected by our new Bayesian optimization algorithms will (nearly) decay at this rate. 

\begin{figure}[htp]
    \centering
\includegraphics[height=.3\textwidth]{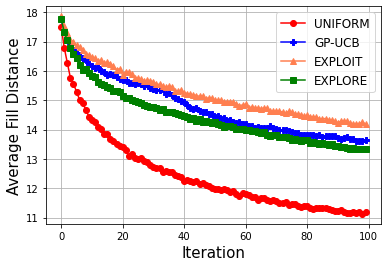}
    \caption{Average fill-distance of a set of query points obtained using four different algorithms over 100 independent experiments. The results are based on a 10-dimensional Rastrigin function. The discrete subset $\mathcal{X}_D$ consists of 100 Latin hypercube samples.}
    \label{EXAMPLE_FILL_RAST}
\end{figure}

\cite{wenzel2021novel} introduced a stabilized greedy algorithm to obtain query points by maximizing the posterior predictive standard deviation at each iteration. Their algorithm sequentially generates a set of query points whose fill-distance decays at a rate $\Theta\left(t^{-\frac{1}{d}}\right)$ by sequentially solving constrained optimization problems, which can be computationally demanding. Since GP-UCB simply promotes exploration through the posterior predictive standard deviation term in the UCB acquisition function, one may heuristically expect the fill-distance of query points selected by the standard GP-UCB algorithm to decay at a slower rate. On the other hand, 
a straightforward online approach to obtain a set of query points whose fill-distance nearly decays at a rate $\Theta\left(t^{-\frac{1}{d}}\right)$ is to sample randomly from a probability measure $P$ with a strictly positive Lebesgue density on $\mathcal{X}.$ Specifically, \cite{oates2019convergence} shows that, in expectation, the fill-distance of independent samples from such a measure decays at a near-optimal rate: for any $\epsilon>0,$ $\mathbb{E}_P \bigl[h(\mathcal{X}, X_t)\bigr] = \mathcal{O}(t^{-\frac{1}{d}+ \epsilon}).$

Figure \ref{EXAMPLE_FILL_RAST} compares the decay of the fill-distance for query points selected using four strategies. For a 10-dimensional Rastrigin function, we consider: (i) GP-UCB; (ii)  EXPLOIT  (i.e., maximizing the posterior mean at each iteration); (iii)  EXPLORE (i.e., maximizing the posterior variance); and (iv) UNIFORM (i.e. independent uniform random samples on $\mathcal{X}$). The results were averaged over 100 independent experiments. The fill-distance for GP-UCB lies in between those for EXPLORE and EXPLOIT; whether it lies closer to one or the other depends on the choice of weight parameter, which here we choose based on a numerical approximation of the max-norm of the objective, $\beta_t^{\frac12} = \max_{x \in \mathcal{X}_D}|f(x)|$ where $\mathcal{X}_D$ is a discretization of the search space $\mathcal{X}$. Note that UNIFORM yields a drastically smaller fill-distance even when compared with EXPLORE. Our new algorithms will leverage random sampling to enhance exploration in Bayesian optimization and achieve improved regret bounds.

\subsection{Improved Exploration via Random Sampling}
In this subsection, we introduce two Bayesian optimization algorithms that leverage random sampling as a tool to facilitate efficient exploration of the search space and enhance the accuracy of surrogate models of the objective function with which to acquire new optimization iterates. While the GP-UCB algorithm selects a single query point $x_t$ per iteration, our algorithms select two query points $\{x_t,  \tilde{x}_t\}$ to produce a single optimization iterate $x_t$.

The first algorithm we introduce, which we call GP-UCB+, selects a query point $x_t$ using the GP-UCB acquisition function and another query point $ \tilde{x}_t$ by random sampling. We outline the pseudocode in Algorithm \ref{GP_UCB_Exp_Random}. The second algorithm we introduce, which we call EXPLOIT+, decouples the exploitation and exploration goals, selecting one query point $x_t$ by maximizing the posterior mean to promote exploitation, and another query point $ \tilde{x}_t$ by random sampling to promote exploration. We outline the pseudocode in Algorithm \ref{Pure_Exp_Random}. As noted above, both GP-UCB+ and EXPLOIT+ produce a single optimization iterate $x_t;$ the role of the additional query point $\tilde{x}_t$ is to enhance the surrogate model of the objective with which $x_t$ is acquired. 
Since our new algorithms require two query points per iteration, in our numerical experiments in Section \ref{sec:numerics} we ensure a fair comparison by running them for has as many iterations as used for algorithms that require one query point per iteration. For the convergence rate analysis in Subsection \ref{subse:reg_bound_det}, the fact that our algorithms require twice as many iterations is inconsequential, since halving the number of iterations does not affect the convergence rate.

\begin{algorithm}[htp]
\caption{GP-UCB+\label{GP_UCB_Exp_Random}.} 
\begin{algorithmic}[1]
    \STATE {\bf Input}: Kernel $k;$ Total number of iterations $T$; Initial design points $X^{\text{full}}_0$; Initial noise-free observations $F^{\text{full}}_0$; Probability distribution $P$ on $\mathcal{X}$; Weights $\{\beta_t\}_{t=1}^{T}.$
    \STATE Construct posterior mean $\mu^{\text{full}}_{0,0}(x)$ and standard deviation $\sigma^{\text{full}}_{0,0}(x)$ using $X^{\text{full}}_0$ and $F^{\text{full}}_0$.
    \STATE {\bf For} $t = 1, \ldots, T$ {\bf do}: 
    \begin{enumerate}
     \setlength{\belowdisplayskip}{0pt} \setlength{\belowdisplayshortskip}{0pt}
\setlength{\abovedisplayskip}{10pt} \setlength{\abovedisplayshortskip}{0pt}
    \item \vspace{-0.25cm} {\bf Exploitation + Exploration:} 
    Set $$x_t = \arg\max_{x \in \mathcal{X}} \mu^{\text{full}}_{t-1,0}(x) + \beta_t^{\frac12} \sigma^{\text{full}}_{t-1,0}(x).$$ 
    \item {\bf Exploration:} Sample $\tilde x_{t} \sim P.$ 
    \item \vspace{-0.25cm}  Set 
         \setlength{\belowdisplayskip}{0pt} \setlength{\belowdisplayshortskip}{5pt}
\setlength{\abovedisplayskip}{0pt} \setlength{\abovedisplayshortskip}{0pt}
    \begin{align*}
        X^{\text{full}}_t &= X^{\text{full}}_{t-1} \cup \{
x_{t}, \tilde x_{t}\}, \\
F^{\text{full}}_t &= F^{\text{full}}_{t-1} \cup \{
f(x_{t}), f(\tilde x_{t})\}.
    \end{align*}
    \item Update $\mu^{\text{full}}_{t,0}(x)$ and $\sigma^{\text{full}}_{t,0}(x)$ using $X^{\text{full}}_t$ and $F^{\text{full}}_t.$
    \end{enumerate}
\STATE \vspace{-0.25cm}  {\bf Output}: optimization iterates $\{x_1, x_2, \ldots, x_T\}.$ 
\end{algorithmic}
\end{algorithm}

\begin{algorithm}[htp]
\caption{EXPLOIT+\label{Pure_Exp_Random}.} 
\begin{algorithmic}[1]
    \STATE {\bf Input}: Kernel $k;$ Total number of iterations $T$; Initial design points $X^{\text{full}}_0$; Initial noise-free observations $F^{\text{full}}_0$; Probability distribution $P$ on $\mathcal{X}$. 
    \STATE Construct posterior mean $\mu^{\text{full}}_{0,0}(x)$ and standard deviation $\sigma^{\text{full}}_{0,0}(x)$ using $X^{\text{full}}_0$ and $F^{\text{full}}_0$.
    \STATE {\bf For} $t = 1, \ldots, T$ {\bf do}: 
    \begin{enumerate}
    \item \vspace{-0.25cm} {\bf Exploitation:} Set $x_t = \arg\max_{x \in \mathcal{X}} \mu^{\text{full}}_{t-1,0}(x).$
    \item \vspace{-0.25cm} {\bf Exploration:} Sample $\tilde x_{t} \sim P.$
    \item \vspace{-0.25cm} Set 
        \setlength{\belowdisplayskip}{10pt} \setlength{\belowdisplayshortskip}{5pt}
\setlength{\abovedisplayskip}{0pt} \setlength{\abovedisplayshortskip}{0pt}
    \begin{align*}
        X^{\text{full}}_t &= X^{\text{full}}_{t-1} \cup \{
x_{t}, \tilde x_{t}\}, \\
F^{\text{full}}_t &= F^{\text{full}}_{t-1} \cup \{
f(x_{t}), f(\tilde x_{t})\}.
    \end{align*}
    \item \vspace{-0.25cm} Update $\mu^{\text{full}}_{t,0}(x)$ using $X^{\text{full}}_t$ and $F^{\text{full}}_t.$
    \end{enumerate}
\STATE \vspace{-0.25cm} {\bf Output}: optimization iterates $\{x_1, x_2, \ldots, x_T\}.$ 
\end{algorithmic}
\end{algorithm}

Notably, EXPLOIT+ does not require input weight parameters $\{\beta_t\}_{t=1}^T.$ As mentioned in Section \ref{sec:relatedwork}, many regret bounds for GP-UCB algorithms rely on choosing the weight parameters as the squared RKHS norm of the objective or in terms of a bound on it. The performance of GP-UCB and GP-UCB+ can be sensitive to this choice, which in practice is often based on empirical tuning or heuristic arguments rather than guided by the theory. In contrast, EXPLOIT+ achieves the same regret bounds as GP-UCB+ and drastically faster rates than GP-UCB without requiring the practitioner to specify weight parameters. Additionally, EXPLORE+ shows competitive empirical performance.

\begin{remark}
In the exploration step, one can acquire a batch of points to further enhance the exploration of GP-UCB+ and EXPLOIT+. As long as the number of points sampled at each iteration does not grow with respect to the iteration index $t$, the regret bounds stated in Theorem \ref{thm:CumulativeRegretBounds} below remain valid.
\end{remark}

\begin{remark}
A common heuristic strategy to expedite the performance of Bayesian optimization algorithms is to acquire a moderate number of initial design points by uniformly sampling the search space. Since the order of the exploration and exploitation steps can be swapped in our algorithms, such heuristic strategy can be interpreted as an initial batch exploration step. 
\end{remark}

\begin{remark}
A natural choice for $P$ is the uniform distribution on the search space $\mathcal{X}$. Our theory, which utilizes bounds on the fill-distance of randomly sampled query points from \cite{oates2019convergence}, holds as long as $P$ has a strictly positive  Lebesgue density on $\mathcal{X}.$ In what follows, we assume throughout that $P$ satisfies this condition.
\end{remark}

\subsection{Regret Bounds}\label{subse:reg_bound_det}
We now obtain regret bounds under the deterministic assumption that $f$ belongs to the RKHS of a kernel $k.$ 
Our algorithms are random due to sampling from $P,$ and we show cumulative regret bounds in expectation with respect to such randomness.
\begin{theorem}\label{thm:CumulativeRegretBounds}
Let $f \in \mathcal{H}_k(\mathcal{X}).$ Suppose $t \in \mathbb{N}$ is large enough. GP-UCB+ with $\beta_t:= \|f\|_{\mathcal{H}_k(\mathcal{X})}^2$ and EXPLOIT+ attain the following instantaneous regret bounds. For Matérn kernels with parameter $\nu > 0$,
$$
\mathbb{E}_P[r_t] = 
    \mathcal{O}\left(t^{-\frac{\nu}{d}+\varepsilon}\right)
$$
where $\varepsilon > 0$ can be arbitrarily small. For squared exponential kernels,
$$
\mathbb{E}_P[r_t] = \mathcal{O}\left(\exp\left(-C t^{\frac{1}{d}-\varepsilon}\right)  \right),
$$
for some constant $C > 0$ with an arbitrarily small $\varepsilon > 0$.
\end{theorem}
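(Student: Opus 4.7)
The plan is to combine three ingredients: a near-optimal fill-distance bound for the randomly sampled query points, a pointwise RKHS interpolation error bound in terms of the fill-distance, and a decomposition of the instantaneous regret in terms of that interpolation error. In both algorithms, the crucial observation is that in each iteration one of the two query points, $\tilde x_t$, is an i.i.d.\ draw from $P$, which alone suffices to control the fill-distance of $X^{\text{full}}_t$.

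\textbf{Step 1 (fill-distance).} Since adding points can only decrease the fill-distance,
\[
h\bigl(\mathcal X, X_{t}^{\text{full}}\bigr)\ \le\ h\bigl(\mathcal X,\{\tilde x_i\}_{i=1}^{t}\bigr),
\]
and the result of \cite{oates2019convergence} gives $\mathbb{E}_P\bigl[h(\mathcal X,\{\tilde x_i\}_{i=1}^{t})\bigr]=\mathcal{O}(t^{-1/d+\varepsilon})$ for any $\varepsilon>0$, together with analogous bounds on the higher moments $\mathbb{E}_P[h^{q}]$ and on tail probabilities $\Prob(h\ge t^{-1/d+\varepsilon})$.

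\textbf{Step 2 (power function / interpolation error).} For $f\in\mathcal H_k(\mathcal X)$ the reproducing property yields the pointwise bound $|f(x)-\mu_{t,0}(x)|\le \|f\|_{\mathcal H_k}\,\sigma_{t,0}(x)$, and classical kriging estimates \cite{wendland2004scattered,teckentrup2020convergence,tuo2020kriging} give, for $h$ sufficiently small,
\[
\|\sigma_{t,0}\|_{L^\infty(\mathcal X)}\ \le\ C\, h(\mathcal X, X_{t})^{\nu}\quad\text{(Mat\'ern),} \qquad \|\sigma_{t,0}\|_{L^\infty(\mathcal X)}\ \le\ C_1\exp\!\bigl(-C_2/h(\mathcal X, X_{t})\bigr)\quad\text{(squared exponential).}
\]

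\textbf{Step 3 (regret decomposition).} For EXPLOIT+, since $x_t$ maximizes $\mu_{t-1,0}$,
\[
r_t=f(x^*)-f(x_t)\le |f(x^*)-\mu_{t-1,0}(x^*)|+|\mu_{t-1,0}(x_t)-f(x_t)|\le 2\|f\|_{\mathcal H_k}\,\|\sigma_{t-1,0}\|_{L^\infty}.
\]
For GP-UCB+ with $\beta_t=\|f\|_{\mathcal H_k}^2$, the standard UCB chain gives
\[
f(x^*)\le \mu_{t-1,0}(x^*)+\beta_t^{1/2}\sigma_{t-1,0}(x^*)\le \mu_{t-1,0}(x_t)+\beta_t^{1/2}\sigma_{t-1,0}(x_t)\le f(x_t)+2\beta_t^{1/2}\sigma_{t-1,0}(x_t),
\]
so again $r_t\le 2\|f\|_{\mathcal H_k}\,\|\sigma_{t-1,0}\|_{L^\infty}$. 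Substituting the Step 2 bounds and taking expectation over $P$ produces the advertised rates.

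The main technical obstacle is converting $\mathbb{E}_P[h]=\mathcal{O}(t^{-1/d+\varepsilon})$ into $\mathbb{E}_P[h^{\nu}]$ (for Mat\'ern, when $\nu>1$) and into $\mathbb{E}_P\bigl[\exp(-C/h)\bigr]$ (for squared exponential). The first follows from higher-moment variants of \cite{oates2019convergence}, which are available because $h$ is uniformly bounded by $\operatorname{diam}(\mathcal X)$; the second requires a tail estimate of the form $\Prob\bigl(h\ge t^{-1/d+\varepsilon}\bigr)\to 0$ sufficiently fast, which again follows from an Oates-type concentration analysis and is absorbed by enlarging $C_2$. The arbitrarily small $\varepsilon$ in the exponents of the final rates precisely reflects the gap between the quasi-uniform decay $t^{-1/d}$ and the best attainable random-design decay, and is the reason the bound is near-optimal rather than optimal.
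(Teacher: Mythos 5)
Your proposal is correct and follows essentially the same route as the paper: the regret decomposition $r_t\le 2\|f\|_{\mathcal H_k}\sup_x\sigma^{\text{full}}_{t-1,0}(x)$ for both algorithms, the Wu--Schaback/Wendland power-function bounds in terms of the fill distance, the observation that the fill distance of $X_t^{\text{full}}$ is dominated by that of the random subsample $\{\tilde x_i\}_{i=1}^t$, and the expected-fill-distance rates from \cite{oates2019convergence}. The only difference is cosmetic: the paper invokes Proposition 4 of \cite{helin2023introduction} to bound $\mathbb{E}_P[h_t^{\nu}]$ and $\mathbb{E}_P[\exp(-C/h_t)]$ directly, whereas you flag the first-moment-to-higher-moment conversion as a separate step to be handled by Oates-type tail estimates.
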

\begin{remark}
In particular, Theorem \ref{thm:CumulativeRegretBounds} implies that
$$
\mathbb{E}_P[S_T] = \begin{cases}
      \mathcal{O}\left(T^{-\frac{\nu}{d}+\varepsilon}\right),  & \text{for Matérn kernels}, \\
      \mathcal{O}\left(\exp\left(-C T^{\frac{1}{d}-\varepsilon}\right)  \right),  & \text{for squared exponential kernels}, 
\end{cases}
$$
for some constant $C > 0.$
\end{remark}
\begin{remark}
For Matérn kernels, the proposed algorithms nearly attain in expectation the optimal convergence rate established by \citep{bull2011convergence} and the convergence rate implied by the cumulative regret bound conjectured in \cite{vakili2022open}. Moreover,  one can further obtain the exact optimal convergence rate by replacing the random sampling step in GP-UCB+ and EXPLOIT+ with a more computationally expensive quasi-uniform sampling scheme.
\end{remark}
\begin{remark}
    Compared with the GP-UCB algorithm with noise-free observations, the proposed algorithms attain improved convergence rates in expectation for both Matérn and squared exponential kernels. For the Matérn kernel, the new convergence rate has a faster polynomial decaying factor with a removal of the logarithmic growth factor. For the squared exponential kernel, the proposed algorithms have an exponential convergence rate, whereas the improved bound for the GP-UCB algorithm in Theorem \ref{UCB_cum_reg_bd} has a convergence rate of $\mathcal{O}(T^{-\frac{1}{2}}\log^{\frac{d}{2}}(T)$).
\end{remark}

\begin{remark}
    For Matérn kernels, compared with the recent preprint \cite{salgia2023random}, which attains $\mathcal{O}(T^{-1}\log^{\frac{3}{2}}T)$ convergence rate when $d < \nu$, our algorithms attains exponential convergence rate. When $d \ge \nu$, \cite{salgia2023random} attains the convergence rate implied by the conjecture in \cite{vakili2022open} up to a logarithmic factor, while we attain the implied convergence rate up to a factor of $\mathcal{O}(T^\varepsilon)$, for arbitrarily small $\varepsilon > 0$. Compared to existing works, we additionally establish exponential convergence rates with the squared exponential kernel.
\end{remark}

\begin{remark}
    Our theoretical analysis largely ignores the constant factor which depends on the dimension $d$ of the search space. Based on our experience with a wide range of numerical experiments, to see the clear effect of a faster convergence rate, we recommend using the proposed strategies in five or larger-dimensional problems. In low-dimensional problems, distinctly faster convergence was not observed in comparison to other Bayesian optimization strategies.
\end{remark}

\section{Enhanced Surrogates for Bayesian Inference}\label{sec:postapproxandsampling}

In this section, we treat optimization iterates as design points to build a Gaussian process surrogate model for an unnormalized log-posterior. When the points chosen by GP-UCB are used to build the Gaussian process surrogate, these points are typically highly concentrated around the mode of the posterior distribution. Therefore, after normalization, the resulting posterior approximation often reflects poorly the global shape of the true posterior. On the other hand, if the Gaussian process surrogate is built using randomly sampled design points, the surrogate may not serve as an accurate proxy of the true posterior in the region around the mode. 

Resolving the weakness of each approach, surrogates based on EXPLOIT+ and GP-UCB+ portray a more accurate depiction of the posterior in the region around its mode, while still accurately capturing its global shape. The proposed Bayesian optimization strategies, therefore, serve as experimental design tools to build a Gaussian process surrogate that successfully captures both the local and global shape of the log-posterior. Our claim is supported by a convergence rate analysis as well as by several numerical examples. In particular, we provide a computationally efficient approximate Bayesian inference strategy based on EXPLOIT+ and GP-UCB+ for intractable likelihood functions arising in parameter inference for differential equations.

Let us denote the target posterior density as $\pi(x) \propto \exp \bigl(V(x)\bigr)$, where $V(x)$ is an unnormalized log-posterior, also known as the energy function. The normalizing constant is given by $Z_\pi = \int_{\mathcal{X}}  \exp \bigl(V(x)\bigr) \, dx$. We denote by $\mu_t^V$ the Gaussian process posterior mean function for the energy function $V$ based on design points $X_t^{\text{full}}$. We denote the posterior density based on this Gaussian process surrogate by
$$
\pi_t(x) = \frac{1}{Z_{\pi,t}}\exp \Bigl(\mu_t^V(x)\Bigr),
$$
where $Z_{\pi,t} = \int_{\mathcal{X}} \exp \bigl(\mu_t^V(x)\bigr)dx$. With these notations, we first show that $Z_{\pi,t}$ is bounded. Next, as the proposed algorithms EXPLOIT+ and GP-UCB+ involve random sampling, we establish an upper bound on the expected Hellinger distance between the two quantities. In addition, utilizing the established approximation bound, the expected Hellinger distance between the true posterior density and the surrogate posterior density will be obtained in Theorem \ref{thm:pos_approx_log_unnorm} below.

\begin{proposition}\label{prop:norm_const_bound}
Let $V \in \mathcal{H}_k (\mathcal{X})$ with $k$ being Mat\'ern or squared exponential kernel. Suppose $\mu_t^V(x)$ is a Gaussian process surrogate of $V$ based on Bayesian optimization strategies with random exploration. Then, there exist positive constants $C_1$ and $C_2$ such that for all $t \in \mathbb{N}$, 
$$
C_1 \le Z_{\pi, t} \le C_2.
$$
\end{proposition}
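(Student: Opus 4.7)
The plan is to reduce the two-sided bound on $Z_{\pi,t}$ to a uniform, $t$-independent $L^\infty$ bound on $\mu_t^V$. Indeed, since $\mathcal{X}$ is a hypercube, if $|\mu_t^V(x)| \le M$ for every $x \in \mathcal{X}$ and every $t \in \mathbb{N}$, then
\[ |\mathcal{X}|\, e^{-M} \;\le\; Z_{\pi,t} \;=\; \int_{\mathcal{X}} \exp\bigl(\mu_t^V(x)\bigr)\, dx \;\le\; |\mathcal{X}|\, e^{M}, \]
which immediately delivers the conclusion with $C_1 = |\mathcal{X}|e^{-M}$ and $C_2 = |\mathcal{X}|e^{M}$, where $|\mathcal{X}|$ denotes the finite Lebesgue volume of the hypercube.

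The uniform $L^\infty$ bound is established in two short steps using standard RKHS facts. First, because the Bayesian optimization strategies in question employ noise-free observations, the posterior predictive mean $\mu_t^V$ coincides with the minimum-norm interpolant in $\mathcal{H}_k(\mathcal{X})$ of the data $\{V(x_i)\}$ at the design points $X_t^{\text{full}}$. Since $V$ itself is a feasible interpolant, this minimum-norm property yields
\[ \|\mu_t^V\|_{\mathcal{H}_k(\mathcal{X})} \;\le\; \|V\|_{\mathcal{H}_k(\mathcal{X})}, \]
a bound that holds for every realization of the (random) design points. Second, by the reproducing property and the Cauchy--Schwarz inequality,
\[ |\mu_t^V(x)| \;=\; \bigl|\langle \mu_t^V,\, k(\cdot,x) \rangle_{\mathcal{H}_k(\mathcal{X})}\bigr| \;\le\; \sqrt{k(x,x)}\; \|\mu_t^V\|_{\mathcal{H}_k(\mathcal{X})}. \]
Both Matérn and squared exponential kernels satisfy $\sup_{x \in \mathcal{X}} k(x,x) < \infty$ (indeed this supremum is a positive constant), so combining the two displays yields the desired uniform bound with $M = \sqrt{\sup_{x \in \mathcal{X}} k(x,x)}\; \|V\|_{\mathcal{H}_k(\mathcal{X})}$.

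I anticipate no serious obstacle. The only subtlety worth spelling out is the justification of the minimum-norm interpolation property of noise-free kernel regression, but this is a well-known consequence of the explicit interpolation formula for $\mu_t^V$ and is independent of the mechanism generating the design points. In particular, the random exploration employed by GP-UCB+ and EXPLOIT+ plays no role in this proposition; the bound is entirely deterministic conditional on $V \in \mathcal{H}_k(\mathcal{X})$. The random exploration will only become essential for the subsequent Theorem \ref{thm:pos_approx_log_unnorm}, where one needs the sup-norm approximation error $\|V - \mu_t^V\|_{L^\infty(\mathcal{X})}$ to vanish at a quantitative rate via fill-distance estimates.
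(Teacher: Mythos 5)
Your proof is correct, but it reaches the uniform bound on $\mu_t^V$ by a different key lemma than the paper. The paper's argument bounds $\mu_t^V$ pointwise through $V$: it invokes the standard Gaussian process error bound $|V(x) - \mu_t^V(x)| \le \|V\|_{\mathcal{H}_k}\,\sigma_{t,0}(x)$ (Corollary 3.11 in Kanagawa et al.), notes that $\sigma_{t,0}(x) \le \sqrt{k(x,x)} \le 1$ for Mat\'ern and squared exponential kernels, and then sandwiches $\exp(\mu_t^V(x))$ between $\exp(V(x) \pm \|V\|_{\mathcal{H}_k})$, finishing with the boundedness of the continuous function $V$ on the compact hypercube. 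You instead use the minimum-norm interpolation characterization of the noise-free posterior mean to get $\|\mu_t^V\|_{\mathcal{H}_k} \le \|V\|_{\mathcal{H}_k}$, and then the reproducing property to bound $\|\mu_t^V\|_{L^\infty}$ directly, never needing to control $V$ itself. Both routes are elementary, deterministic in the design points, and rest on the same underlying projection structure of kernel interpolation; yours is marginally more self-contained (one uniform constant $M$ rather than a two-sided sandwich through $V$), while the paper's version reuses the exact same error bound that drives its regret analysis, which keeps the toolkit minimal. The only detail worth a footnote in your version is that when design points repeat, $K_{tt}$ is singular and the interpolant should be defined via the pseudo-inverse, under which the minimum-norm property still holds; this does not affect the conclusion.
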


\begin{remark}
Unlike the results in \cite{stuart2018posterior}, which build Gaussian process surrogates for a forward map or a likelihood term, our result is based on Gaussian process surrogates for an unnormalized log-posterior density that incorporates both the likelihood function and the prior distribution.
\end{remark}

\begin{theorem}\label{thm:pos_approx_log_unnorm}
Let $V \in \mathcal{H}_k (\mathcal{X})$ with $k$ being Mat\'ern or squared exponential kernel. Suppose $\mu_t^V(x)$ is a Gaussian process surrogate of $V$ based on Bayesian optimization strategies with random exploration. Suppose the target posterior density is given by $\pi(x) \propto \exp \bigl(V(x)\bigr).$ Then, for sufficiently large $t \in \mathbb{N}$, with the Mat\'ern kernel with a smoothness parameter $\nu > 0$,
$$
\mathbb{E}_P[d_{H}(\pi, \pi_t)] =  
    \mathcal{O}\left(t^{- \frac{\nu}{d} - \frac{1}{2} + \epsilon} \right), 
$$
and with the squared exponential kernel, 
$$
\mathbb{E}_P \left[d_H(\pi, \pi_t) \right] =  \mathcal{O}\left(\exp \left(-C t^{\frac{1}{d}-\epsilon}\right) \right),
$$
for some constant $C > 0$ with an arbitrarily small $\epsilon > 0$.
\end{theorem}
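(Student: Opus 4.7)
The plan is to reduce the expected Hellinger distance to an expected $L^2$ interpolation error of $V$ by its Gaussian process posterior mean $\mu_t^V$, and then combine a kernel interpolation bound with the expected fill-distance bound for randomly drawn design points.

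First, I would apply the posterior-perturbation argument of \cite{stuart2018posterior}. Since $V\in\mathcal{H}_k(\mathcal{X})$ is bounded on the compact $\mathcal{X}$, the reproducing property bounds $\|\mu_t^V\|_\infty$ uniformly in $t$ by a multiple of $\|V\|_{\mathcal{H}_k}$, and Proposition \ref{prop:norm_const_bound} bounds $Z_{\pi,t}$ uniformly above and below. Writing
\begin{align*}
\sqrt{\pi(x)}-\sqrt{\pi_t(x)} \;=\; \frac{e^{\mu_t^V(x)/2}}{\sqrt{Z_{\pi,t}}}\!\left(e^{(V(x)-\mu_t^V(x))/2}\sqrt{Z_{\pi,t}/Z_\pi}-1\right),
\end{align*}
using the identity $Z_\pi/Z_{\pi,t}=\mathbb{E}_{\pi_t}[\exp(V-\mu_t^V)]$ so that the first-order terms in the Taylor expansion of $\exp$ cancel, and then squaring and integrating, I obtain
\begin{align*}
d_H^2(\pi,\pi_t) \;\le\; C\,\|V-\mu_t^V\|_{L^2(\mathcal{X})}^2,
\end{align*}
with $C$ depending only on $\|V\|_{\mathcal{H}_k}$ and $|\mathcal{X}|$.

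Next, since $V\in\mathcal{H}_k(\mathcal{X})$ and $\mu_t^V$ is the kernel interpolant at the design points $X_t^{\text{full}}$, I would invoke the $L^2$ scattered-data approximation bound (e.g., from \cite{wendland2004scattered}), which for the Matérn kernel with smoothness $\nu$ gives
\begin{align*}
\|V-\mu_t^V\|_{L^2(\mathcal{X})} \;\le\; C\,\|V\|_{\mathcal{H}_k}\, h(\mathcal{X},X_t^{\text{full}})^{\nu+d/2};
\end{align*}
the exponent $\nu+d/2$ is strictly better than the $L^\infty$ exponent $\nu$ by the factor $h^{d/2}$ that ultimately produces the extra $t^{-1/2}$ in the final rate, and reflects the norm-equivalence $\mathcal{H}_k\simeq H^{\nu+d/2}(\mathcal{X})$. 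For the squared exponential kernel, the analogous bound takes the exponential form $\|V-\mu_t^V\|_{L^2}=\mathcal{O}(\exp(-c/h(\mathcal{X},X_t^{\text{full}})))$. Because GP-UCB+ and EXPLOIT+ include an independent sample $\tilde x_i\sim P$ at every iteration, $X_t^{\text{full}}\supseteq\{\tilde x_1,\ldots,\tilde x_t\}$ and its fill-distance is dominated by that of the random subsample, so the expected fill-distance bound of \cite{oates2019convergence} gives $\mathbb{E}_P[h(\mathcal{X},X_t^{\text{full}})^p]=\mathcal{O}(t^{-p/d+\epsilon})$ for any $p>0$ and any $\epsilon>0$.

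Combining the three steps and applying $\mathbb{E}_P[d_H]\le(\mathbb{E}_P[d_H^2])^{1/2}$ yields, for the Matérn kernel,
\begin{align*}
\mathbb{E}_P[d_H(\pi,\pi_t)] \;\le\; C\bigl(\mathbb{E}_P[h(\mathcal{X},X_t^{\text{full}})^{2\nu+d}]\bigr)^{1/2} \;=\; \mathcal{O}\bigl(t^{-\nu/d-1/2+\epsilon}\bigr),
\end{align*}
and, using concentration of $h$ around its mean to pass the expectation through the exponential, for the squared exponential kernel,
\begin{align*}
\mathbb{E}_P[d_H(\pi,\pi_t)] \;=\; \mathcal{O}\bigl(\exp(-Ct^{1/d-\epsilon})\bigr).
\end{align*}
The main technical obstacle is the first step: one must carefully track how the leading-order normalization terms cancel under the Taylor expansion of $\sqrt{\pi}-\sqrt{\pi_t}$, leveraging Proposition \ref{prop:norm_const_bound} together with the uniform RKHS bound on $\|\mu_t^V\|_\infty$, so that the error is controlled by the $L^2$ (and not merely $L^\infty$) norm of $V-\mu_t^V$. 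Without this $L^2$ improvement the second step would only supply the weaker $h^\nu$ rate, and the proof would yield the suboptimal $\mathcal{O}(t^{-\nu/d+\epsilon})$ in place of the claimed $\mathcal{O}(t^{-\nu/d-1/2+\epsilon})$; a secondary subtlety in the squared exponential case is choosing the right form of the exponential interpolation bound so that substitution of $h=\mathcal{O}(t^{-1/d+\epsilon})$ reproduces exactly the stated exponent $t^{1/d-\epsilon}$.
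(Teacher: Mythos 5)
Your proposal is correct and follows essentially the same route as the paper: reduce $d_H(\pi,\pi_t)$ to $\|V-\mu_t^V\|_{L^2(\mathcal{X})}$ using the uniform bounds on $Z_{\pi,t}$ and local Lipschitzness of the exponential, invoke the $L^2$ sampling inequality with exponent $\nu+d/2$ (resp.\ the exponential bound from Theorem 11.22 of Wendland), dominate the fill distance of $X_t^{\text{full}}$ by that of the random subsample, and apply the expected fill-distance bound of Oates et al. The only cosmetic difference is in the first step, where the paper splits through the intermediate density $\tilde\pi_t$ and uses the mean value theorem plus Jensen, whereas you factor the difference of square roots directly; both yield the same $L^2$ reduction, and your invocation of a first-order cancellation identity is unnecessary (simple Lipschitz bounds on $e^u-1$ and $\sqrt{z}$ suffice).
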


\begin{remark}
In comparison to the posterior approximation results in \cite{stuart2018posterior}, whose results are stated in terms of the fill distance, Theorem \ref{thm:pos_approx_log_unnorm} has an explicit dependence on the number of design points. For Mat\'ern kernels, the established results are near-optimal in the sense that they almost match in expectation the approximation rates obtained in \cite{stuart2018posterior} after plugging in the optimal fill-distance decaying rate. Our results also cover the squared exponential kernel, which was not considered in \cite{stuart2018posterior}.
\end{remark}

\begin{remark}
For Mat\'ern kernels, if our posterior approximation is used in the context of normalizing constant approximation, our algorithms nearly obtain the best possible worst-case approximation rate proven in \cite{novak2006deterministic}. For squared exponential kernels, our algorithms nearly match the approximation rate obtained by the adaptive Bayesian quadrature rule proposed in \cite{kanagawa2019convergence}. If the random sampling step in Algorithms \ref{GP_UCB_Exp_Random} and  \ref{Pure_Exp_Random} is replaced by a quasi-uniform sampling scheme, the approximation rate is analogous to the rates established in \cite{novak2006deterministic, kanagawa2019convergence} with the potential of sacrificing computational efficiency. 
\end{remark}

\section{Numerical Experiments}\label{sec:numerics}
This section explores the empirical performance of our methods. We consider 1) three benchmark optimization tasks, 2) hyperparameter tuning for a machine learning model, 3) optimizing a black-box objective function designed to guide engineering decisions, and 4) posterior approximation for Bayesian inference of differential equations. For optimization tasks, we compare the new algorithms (GP-UCB+, EXPLOIT+) with GP-UCB and two other popular Bayesian optimization strategies: Expected Improvement (EI) and Probability of Improvement (PI). We also compare with the EXPLOIT approach outlined in Subsection \ref{ssec:howwell}, but not with EXPLORE as this method did not achieve competitive performance. For the posterior approximation task, we compare the new algorithms with baseline algorithms that use design points obtained by GP-UCB and by random sampling.  Throughout, we choose the distribution $P$ that governs random exploration in the new algorithms to be uniform on $\mathcal{X}$. For the weight parameter of UCB acquisition functions, we considered a well-tuned constant value $\beta_t^{1/2} = 2$ that achieves good performance in our examples, and the approach in \cite{ray2019bayesian}, which sets  $\beta_t^{1/2} = \max_{x \in \mathcal{X}_D} |f(x)|$ where $\mathcal{X}_D$ is a discretization of the search space. All the hyperparameters of the kernel function were iteratively updated through maximum likelihood estimation. Since the new algorithms need two noise-free observations per iteration but the methods we compare with only need one, we run the new algorithms for half as many iterations to ensure a fair comparison.

\subsection{Benchmark Objective Functions}\label{subsec:BENCH}
We consider three 10-dimensional benchmark objective functions: Ackley, Rastrigin, and Levy. 
Each of them has a unique global maximizer but many local optima, posing a challenge to 
 standard first and second-order convex optimization algorithms. Following the virtual library of simulation experiments: \url{https://www.sfu.ca/~ssurjano}, we respectively set the search space to be $[-32.768, 32.768]^{10}, [-5.12, 5.12]^{10},$ and $[-10, 10]^{10}$. We used a Matérn kernel with the default initial smoothness parameter $\nu = 2.5$ and initial lengthscale parameter $\ell = 1$. For each method and objective, we obtain 400 noise-free observations and average the results over 20 independent experiments. For GP-UCB and GP-UCB+, we set $\beta_t^{1/2} =2.$ Figure \ref{Benchmark_SIMPLE_PLOT} shows the average simple regrets, given by $f^* - \max_{t=1,\ldots, T} f(x_t).$ We report the regret as a function of the number of observations rather than the number of iterations to ensure a fair comparison. For all three benchmark functions, GP-UCB+ and EXPLOIT+ outperform the other methods. To further demonstrate the strength of the proposed algorithms, Table \ref{table:benchmarks} shows the average simple  regret at the last iteration, normalized so that for each benchmark objective the worst-performing algorithm has unit  simple regret. 
 Table \ref{table:benchmarksstd} in Appendix \ref{appendix:BENCHMARK} shows results for the standard deviation, indicating that the new methods are not only more accurate, but also more precise.

To illustrate the sensitivity of UCB algorithms to the choice of weight parameters, we include numerical results with $\beta_t^{1/2} = \max_{x \in \mathcal{X}_D} |f(x)|$ in Appendix \ref{appendix:BENCHMARK}. In particular, since GP-UCB+ has an additional exploration step through random sampling,  using a smaller weight parameter for GP-UCB+ than for GP-UCB tends to work more effectively. Remarkably, the parameter-free EXPLOIT+ algorithm achieves competitive performance compared with UCB algorithms with well-tuned weight parameters.

\begin{figure}[htp]
  \centering
\includegraphics[scale=0.35]{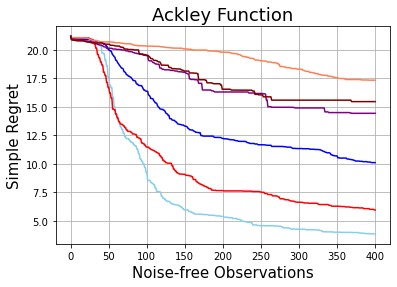}
\includegraphics[scale=0.35]{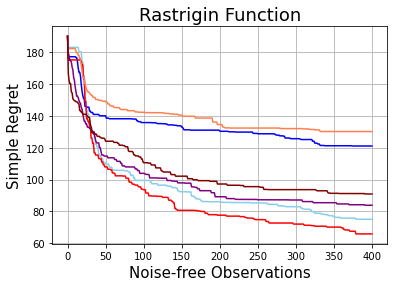}
\includegraphics[scale=0.35]{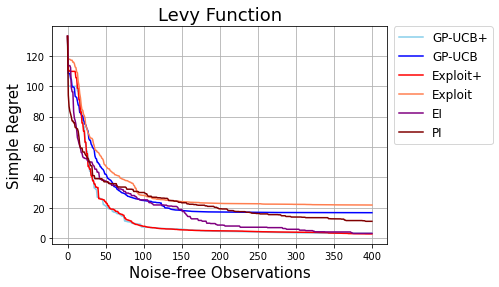}
  \caption{Simple regret vs number of noise-free observations.}
  \label{Benchmark_SIMPLE_PLOT}
\end{figure}

\begin{table}[htp]
\begin{center}
\begin{sc}
\begin{tabular}{lcccr}
\toprule
Method & Ackley & Rastrigin & Levy  \\
\midrule
GP-UCB+ & \textbf{0.222} & 0.576 & 0.146  \\
GP-UCB & 0.583 & 0.930 & 0.768  \\
EXPLOIT+  & 0.342 & \textbf{0.505} & \textbf{0.126}  \\
EXPLOIT  & 1.000 & 1.000 & 1.000   \\
EI   & 0.832 & 0.644 & 0.142 \\
PI & 0.891 & 0.698 & 0.507  \\
\bottomrule
\end{tabular}
\end{sc}
\end{center}
\vskip -0.1in
\caption{\label{table:benchmarks} Normalized average simple regret with 400 function evaluations for benchmark objectives in dimension $d = 10.$}
\end{table}

\subsection{Random Forest Hyperparameter Tuning}\label{subsec:RF_HYPER}
Here we use Bayesian optimization to tune four hyperparameters of a random forest regression model for the California housing dataset \cite{pace1997sparse}. The parameters of interest are (i) three integer-valued quantities: the number of trees in the forest, the maximum depth of the tree, and the minimum number of samples required to split the internal node; and (ii) a real-valued quantity between zero and one: the transformed maximum number of features to consider when looking for the best split.  For the discrete quantities, instead of optimizing over a discrete search space, we performed the optimization over a continuous domain and truncated the decimal values when evaluating the objective function. We split the dataset into training (80\%) and testing (20\%) sets. To define a deterministic objective function, we fixed the random state parameter for the \texttt{RandomForestRegressor} function from the Python scikit-learn package and built the model using the training set. We then defined our objective function to be the negative mean-squared test error of the built model. We used a Matérn kernel with initial smoothness parameter $\nu = 2.5$ and initial lengthscale parameter $\ell = 1$. For the GP-UCB and GP-UCB+ algorithms, we set $\beta_t^{1/2} = \max_{x \in \mathcal{X}_D} |f(x)|$ where $\mathcal{X}_D$ consists of 40 Latin hypercube samples. We conducted 20 independent experiments with 80 noise-free observations. From Table \ref{table:HYPER} and Figure \ref{EXAMPLE_CUM_RF}, we see that both GP-UCB+ and EXPLOIT+ algorithms led to smaller cumulative test errors. An instantaneous  test error  plot with implementation details can be found 
in Appendix \ref{appendix:RF}.

\begin{figure}
\begin{floatrow}
\ffigbox{%
  \includegraphics[height=.25\textwidth]{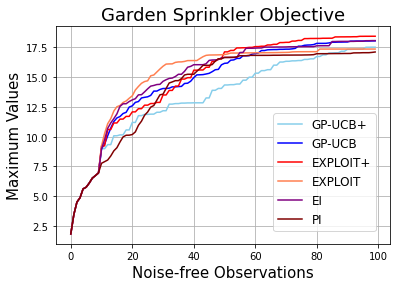}%
}{%
  \caption{\label{EXAMPLE_CUM_RF}Cumulative test error vs number of noise-free observations.}%
}
\capbtabbox{%

  \begin{sc}
\begin{tabular}{lcr}
\toprule
Method & Mean $\pm$ SD  \\
\midrule
GP-UCB+ &  $28.552 \pm 1.971$ & \\
GP-UCB  &  $35.226 \pm 1.467$ & \\
\textbf{EXPLOIT+} & $\mathbf{26.346 \pm 1.404}$&  \\
EXPLOIT  & $35.026 \pm 1.092$ &    \\
EI   & $34.294 \pm 0.987$  &       \\
PI   & $33.438 \pm 1.085$ &        \\
\bottomrule
\end{tabular}
\end{sc}
}{%
  \caption{\label{table:HYPER} Cumulative test error averaged over 20 experiments.}%
}
\end{floatrow}
\end{figure}

\subsection{Garden Sprinkler Computer Model}\label{subsec:Sprinkler}
The Garden Sprinkler computer model simulates the range of a garden sprinkler that sprays water. The model contains eight physical parameters that represent vertical nozzle angle, tangential nozzle angle, nozzle profile, diameter of the sprinkler head, dynamic friction moment, static friction moment, entrance pressure, and diameter flow line. First introduced in \cite{siebertz2010statistische} and later formulated into a deterministic black-box model by \cite{pourmohamad2020compmodels}, the goal is to maximize the accessible range of a garden sprinkler over the domain of the eight-dimensional parameter space. In this problem, the observations of the objective are noise-free. Following \cite{pourmohamad2021bayesian}, for GP-UCB and GP-UCB+ we set $\beta_t^{1/2} = 2$  and used a squared exponential kernel with an initial lengthscale parameter $\ell^2 = 50$. We ran 30 independent experiments, each with 100 noise-free observations. The results in Table \ref{table:GARDEN_SPRINGK} and Figure \ref{EXAMPLE_MAX_SPRING}  demonstrate that the new algorithms achieve competitive performance. In particular, EXPLOIT+ attains on average the largest maximum value, while also retaining a moderate standard deviation across experiments.

\begin{figure}
\begin{floatrow}
\ffigbox{%
  \includegraphics[height=.25\textwidth]{Figures/GARDEN_SPRINGK_MAX.png}%
}{%
  \caption{\label{EXAMPLE_MAX_SPRING}Maximum attained value of the garden sprinkler objective function vs number of noise-free observations.}%
}
\capbtabbox{%

  \begin{sc}
\begin{tabular}{lcr}
\toprule
Method & Mean $\pm$ SD \\
\midrule
GP-UCB+  & 17.511 $\pm$ 1.603 & \\
GP-UCB & 18.038 $\pm$ 2.026 & \\
\textbf{EXPLOIT+}    & $\mathbf{18.427 \pm 1.825}$ &  \\
EXPLOIT    & 17.352 $\pm$ 2.537 &    \\
EI   & 18.061 $\pm$ 1.657 &       \\
PI   & 17.105 $\pm$ 2.329 &        \\
\bottomrule
\end{tabular}
\end{sc}
}{%
  \caption{\label{table:GARDEN_SPRINGK} Maximum attained value of the garden sprinkler objective function averaged over 30 experiments.}%
}
\end{floatrow}
\end{figure}

\subsection{Bayesian Inference for Parameters of Differential Equations}
In this section, our goal is no longer to optimize an objective function, but instead to construct a computationally efficient Gaussian process surrogate for an intractable unnormalized log-posterior density. To this end, we utilize a set of query locations selected by Bayesian optimization strategies as design points to build the Gaussian process surrogate. We closely follow the settings considered in \cite{schneider2017earth,cleary2021calibrate}, where the intractable target posterior distribution is over parameters of differential equations. 

Consider the following dynamical system governed by the set of differential equations given by
\begin{equation}\label{diffEQ}
\frac{dz}{dt} = F(z(t), x), \qquad  z(0) = z_0,
\end{equation}
where $x \in \mathcal{X} \subset \mathbb{R}^d$ is a parameter associated with the dynamics $z(t) \in \mathbb{R}^k$ for all time $t \ge 0$. Let us define a solution mapping $\mathcal{S}$, 
$$
\mathcal{S}: x \mapsto z(t), ~t \in [0, T],
$$
which maps each parameter value to a particular solution path of the differential equations. Here, the time interval $[0, T]$ is a prespecified time-window of interest. For a predetermined degree $m \in \mathbb{N}$, we also define an averaging operator $\mathcal{A}$, given by
$$
\mathcal{A}: z(t) \mapsto \frac{1}{T}\left(\int_{0}^Tz(t)dt, \int_{0}^Tz^2(t)dt, \ldots, \int_{0}^Tz^m(t)dt \right) \in \mathbb{R}^M,
$$
where the integral is understood as a componentwise operation. Note that $z^m(t)$ contains not only $m$-th power of its component, but also $m$-th order product between its components. In short, the averaging operator is a mapping from dynamics to a lower-dimensional summary of the dynamics: in this case, the average over time. 

The forward map of our interest is defined to be
$$
\mathcal{G}(x) = \mathcal{A} \circ \mathcal{S}(x),
$$
which maps a parameter value to a vector of moments of the dynamics. We assume the observed data is given by
$$
\mathcal{D} = \mathcal{G}(x^*) + \eta,
$$
for some true parameter $x^*\in \mathcal{X} \subset \mathbb{R}^{d}$ and $\eta\in \mathbb{R}^M$ represents a zero-centered random observational noise. In other words, the data at hand is partial information of the dynamics, which are the first $m$-th order moments in this case. Denoting the Lebesgue densities of measurement error and prior distribution by $f$ and $p$, the unnormalized log-posterior density is given by
\begin{align*}
V(x) = \log f \bigl(\mathcal{D}-\mathcal{G}(x)\bigr) + \log p(x),
\end{align*}
which we aim to replace with a computationally efficient Gaussian process surrogate. In particular, in the case of Gaussian noise and prior assumption, the unnormalized log-posterior density is given by
\begin{align*}
V(x) = -\|\mathcal{D} - \mathcal{G}(x)\|_{\Gamma}^2 - \|x-m_0\|_P^2,
\end{align*}
where $m_0$ is the prior mean, and $\Gamma$ and $P$ are the noise and prior covariances. In simulations, following \cite{schneider2017earth}, we set $\Gamma$ to be a diagonal matrix whose $i$-th component corresponds to the sample variance of the $i$-th component of $\mathcal{G}(x)$ over a long time-window. In practice, the forward map $\mathcal{G}$ often does not have a closed mathematical expression, and it can be very expensive to evaluate as it requires solving differential equations numerically. In the following subsections, we consider Rossler and Lorenz dynamics and demonstrate the approximate Bayesian inference procedures based on EXPLOIT+ and GP-UCB+ surrogates.

\subsubsection{Rossler Dynamics}
Consider a set of ordinary differential equations describing the Rossler system given by
\begin{alignat*}{3}
    \frac{dz_1}{dt} &= -z_2-z_3, \qquad  &&z_1(0) = 1,\\
    \frac{dz_2}{dt} &= z_1 + 0.2 z_2, \qquad &&z_2(0) = 0, \\
    \frac{dz_3}{dt} &= 0.2 + z_3(z_1-x), \qquad &&z_3(0) = 1,
\end{alignat*}
where the true parameter was set to be $x^* = 5.7$. We assume data is the noise-corrupted image of the forward map, given by the first and second moments, which can be written as
\begin{align*}
\mathcal{D} = \frac{1}{T}\left(\int_{0}^T z_*(t)dt,  \int_{0}^T z_*^{2}(t)dt  \right) + \eta = \mathcal{G}(x^*) + \eta, \quad \eta \sim \mathcal{N}(0, \Gamma),
\end{align*}
where $z_*(t) = \bigl(z_{*,1}(t), z_{*,2}(t), z_{*,3}(t)\bigr)$ represents the solution path of the Rossler system with the true parameter $x^*$ and  $z_*^2(t) = (z_{*,1}^2(t), z_{*,2}^2(t), z_{*,3}^2(t), z_{*,1}(t)z_{*,2}(t), z_{*,1}(t)z_{*,3}(t), z_{*,2}(t)z_{*,3}(t))$ is the second-order product terms of $z_*$. For the noise covariance matrix $\Gamma$, we set it to be a diagonal matrix, whose components are sample variances of $(z(t), z^2(t))$ over the time-window $[20, 500]$. 

To perform the Bayesian inference, we assume that the prior of $x^*$ is $\mathcal{N}(6, 2^2)$. Our goal is then to build an accurate Gaussian process surrogate for the unnormalized log-posterior density given by
$$
 V^{\text{ROS}}(x) =  -\frac{1}{2}\|\mathcal{D}-\mathcal{G}(x)\|_{\Gamma}^2 - \frac{1}{8}(x - 6)^2,
$$ 
where $\mathcal{G}$ maps $x$ to the first and second-order moments of the corresponding dynamics. In numerical experiments, we approximated the continuum moments with finite averages. The time window we consider for the dynamics is set to be between $[20, 50]$, and the parameter space of interest is an interval $[1, 14]$, outside which the prior distribution has negligible mass. We run 20 independent experiments where each experiment involves 20 forward map evaluations for each of the four strategies: 1) GP-UCB, 2) Uniform random sample, 3) EXPLOIT+, and 4) GP-UCB+.

In Figure \ref{EXAMPLE_ROSSLER}, we provide plots for $V^{\text{ROS}}$ with its surrogates formed with design points obtained via the aforementioned four strategies, as well as the histogram of selected design locations for each of the four strategies we consider. As one can see from the histogram in Figure \ref{EXAMPLE_ROSSLER}, points selected by GP-UCB are highly concentrated around the posterior mode, while uniform random samples are evenly distributed along the search space. The two proposed algorithms, EXPLOIT+ and GP-UCB+, explored a wider range of parameter values in comparison to GP-UCB. In addition, the query locations of the proposed algorithms were more concentrated around the optimum compared to a uniform random sample. The patterns that appeared in the histogram are also reflected in the two surrogate plots in Figure \ref{EXAMPLE_ROSSLER}. The surrogate based on GP-UCB design points colored in blue approximates poorly  $V^{\text{ROS}}$ in the region to the right of the posterior mode, while it accurately approximates it in the interval (7, 8) around the posterior mode. On the other hand, the surrogate based on uniform random samples approximates $V^{\text{ROS}}$ better in the region to the right of the posterior mode, but the surrogate did not reach its maximum close to the posterior mode. Unlike the two aforementioned strategies, thanks to the balance between exploitation and exploration maintained by EXPLOIT+ and GP-UCB+, their surrogates reached the maximum close to the posterior mode, while also accurately reflecting the global shape of the energy function $V^{\text{ROS}}$.

\begin{figure}[htp]
    \centering
\includegraphics[height=.235\textwidth]{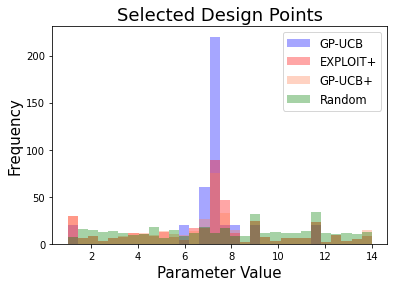}
\includegraphics[height=.235\textwidth]{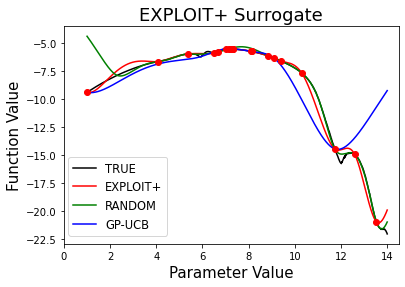}
\includegraphics[height=.235\textwidth]{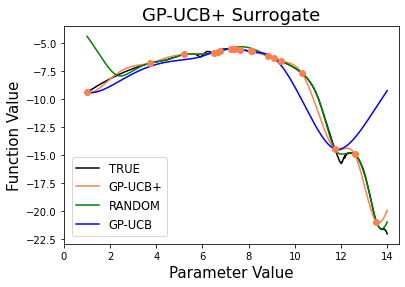}
\caption{Design points and surrogates for parameter inference in the Rossler system.}
\label{EXAMPLE_ROSSLER}
\end{figure}

Next, we implement a rejection sampling algorithm to obtain posterior samples from the surrogate distribution. As we have access to an accurate estimate of the maximum of the unnormalized log-posterior density, rejection sampling naturally serves as a parallelizable sampling algorithm that can generate samples from the surrogate posterior density. In Figure \ref{EXAMPLE_ROSSLER_REJ}, we provide histograms of 2000 samples with the true posterior density colored in black. For each of the aforementioned four strategies, the kernel density estimator of the obtained samples is plotted on top of the histogram. As one can see from Figure \ref{EXAMPLE_ROSSLER_REJ}, the lack of exploration due to the concentration of GP-UCB points in a neighborhood around the posterior mode makes the density based on GP-UCB surrogate models to be excessively concentrated around its mode, yielding an inaccurate approximation of the posterior density. In contrast, the density based on a random sampling surrogate model are more dispersed over the search space, resulting in poor posterior approximation near the true posterior mode. In comparison, surrogates based on both EXPLOIT+ and GP-UCB+ led to much more accurate approximations of the true posterior distribution. The surrogates not only captured the local shape of the unnormalized log-posterior density in the region around the posterior mode, but also depicted the overall shape of target density over the parameter space. For twenty independent experiments, we computed $\ell_2$-difference between the target and surrogate density vectors along the grid of size $1401$ in $[1,14]$. Their average and standard deviations are shown in Table \ref{rossler-table}, which clearly demonstrates the superior performance of the proposed algorithms in the posterior approximation task.

\begin{figure}[htp]
    \centering
\includegraphics[height=.25\textwidth]{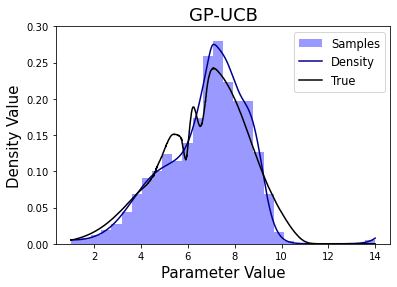}
\includegraphics[height=.25\textwidth]{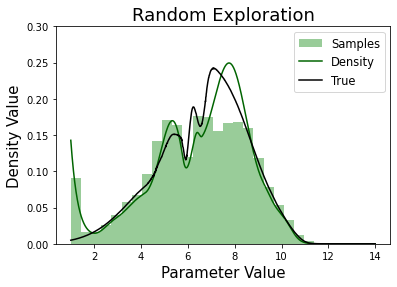}
\includegraphics[height=.25\textwidth]{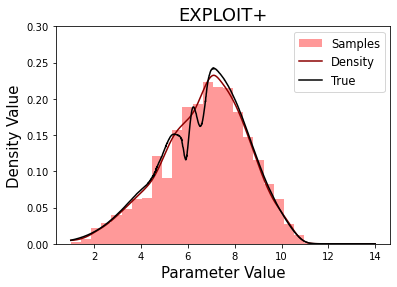}
\includegraphics[height=.25\textwidth]{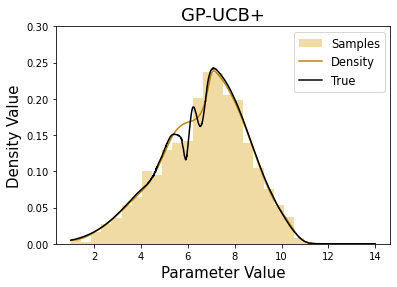}
\caption{Rejection sampling for posterior inference of a parameter in the Rossler system.}
\label{EXAMPLE_ROSSLER_REJ}
\end{figure}

\begin{table}[htp]
\begin{center}
\begin{sc}
\begin{tabular}{lcr}
\toprule
Method & $\ell_2$-difference $\pm$ SD  \\
\midrule
GP-UCB & $0.7134 \pm 0.0004$  \\
RANDOM & $1.1129 \pm 0.9910$   \\
EXPLOIT+  & $0.4285 \pm 0.0986$  \\
GP-UCB+  & $\mathbf{0.3569 \pm 0.0710} $   \\
\bottomrule
\end{tabular}
\end{sc}
\end{center}
\vskip -0.1in
\caption{\label{table:rossler_post_approx} Average $\ell_2$-difference between the true and approximate posterior densities.}
\label{rossler-table}
\end{table}

\subsubsection{Lorenz-63 Dynamics}
Let us now consider a set of ordinary differential equations describing the Lorenz-63 dynamical system given by
\begin{alignat*}{3}
    \frac{dz_1}{dt} &= x_1^*(z_2-z_1), \qquad \qquad &&z_1(0) = 1,\\
    \frac{dz_2}{dt} &= x_2^*z_1 - z_2 - z_1z_3, \quad &&z_2(0) = 0,\\
    \frac{dz_3}{dt} &= z_1z_2 - x_3^*z_3, \quad &&z_3(0) = 1,
\end{alignat*}
with true parameter $x^* = (x_1^*, x_2^*, x_3^*) = (10, 28, 8/3)$. Like in the Rossler dynamics example, we assume the data is the noise-corrupted image of the true parameter $x^*$ under the forward map $\mathcal{G}$, given by the first and second moments of the dynamics. We assume the prior of $x^*$ is $\mathcal{N}(m_0, P)$ with $m_0 = (10, 28.5, 2.7)$ and $P = \text{diag}([0.25, 2.25, 0.49])$. We consider a time-window $[10, 200]$, which can be computationally burdensome if one wants to obtain many forward map evaluations. Our goal is to build a Gaussian process surrogate for the unnormalized log-posterior density given by
$$
 V^{\text{LZ63}}(x) =  -\frac{1}{2}\|\mathcal{D}-\mathcal{G}(x)\|_{\Gamma}^2 - \frac{1}{2}\|x - m_0\|_P^2.
$$ 
For the noise covariance matrix $\Gamma$, we set it to be a diagonal matrix, whose components are sample variances of $(z(t), z^2(t))$ over the long time-window $[10, 2000]$ scaled by the noise level of $0.25$. The search space of consideration is set to be $[8.72, 11.28] \times [24.66,32.34] \times [0.908,4.492]$, which effectively contains 99\% of the mass of the prior distribution. We run ten independent experiments where each experiment involves 400 forward map evaluations (where for each of the four strategies: 1) GP-UCB, 2) Uniform random sample, 3) EXPLOIT+, and 4) GP-UCB+.

To compare the posterior approximations for each of the aforementioned four strategies, we implement a random walk Metropolis-Hastings algorithm and provide MCMC samples for each surrogate posterior. We ran 20000 MCMC iterations and discarded samples from the initial 10000 iterations as burn-in period samples. We tuned the parameter for random walk Metropolis-Hasting to roughly yield a 50\% acceptance rate. In Figure \ref{EXAMPLE_LORENZ}, we provide MCMC samples for each strategy along with the MCMC samples for the true posterior distribution. As one can see from the third row of Figure \ref{EXAMPLE_LORENZ}, MCMC samples based on GP-UCB strategies tend to concentrate around the posterior mode, but due to lack of exploration, the posterior approximation is very inaccurate far from the posterior mode. In the meantime, in the second row of Figure \ref{EXAMPLE_LORENZ}, for both the first and third parameters, the MCMC samples based on a uniform random sampling strategy concentrate far away from the mode of the true posterior density, indicating lack of exploitation. In comparison to GP-UCB or uniform random sampling, in both the first and fourth rows of Figure \ref{EXAMPLE_LORENZ}, MCMC samples based on EXPLOIT+ or GP-UCB+ closely capture the global shape of the target posterior while maintaining high accuracy around the true posterior mode. We computed the $\ell_2$-difference between the true unnormalized posterior density vector and that obtained from surrogate models based on four experimental design strategies: 1) GP-UCB, 2) Uniform random sample, 3) EXPLOIT+, and 4) GP-UCB+ along the Latin hypercube samples of size 30000. We computed the differences for ten independent experiments and provided their averages in Table \ref{table:Lorenz_post_approx}. As one can see from Table \ref{table:Lorenz_post_approx}, surrogate posterior densities obtained from EXPLOIT+ and GP-UCB+ had a considerably smaller $\ell_2$-difference in comparison to vanilla GP-UCB and uniform random exploration. In particular, we observed that even for a moderate parameter dimension, Gaussian process surrogates based on random sampled design points can be highly inaccurate unless a significant number of design points are used. On the other hand, the design points selected by GP-UCB algorithm yielded a somewhat better Gaussian process surrogate as it effectively captured the local geometry of the objective function in the region around the mode. Appropriately combining design points from random exploration and GP-UCB, both EXPLOIT+ and GP-UCB+ facilitated building more accurate surrogate models with no additional computational cost.  \nc

\begin{figure}[htp]
    \centering
\includegraphics[height=.235\textwidth]{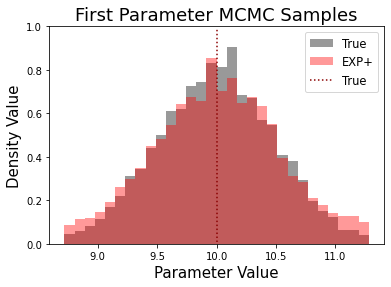}
\includegraphics[height=.235\textwidth]{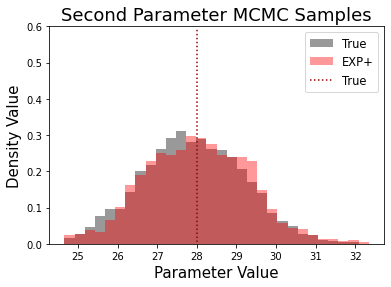}
\includegraphics[height=.235\textwidth]{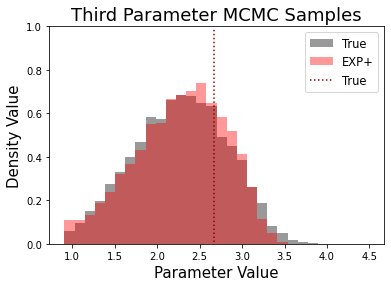}
\includegraphics[height=.235\textwidth]{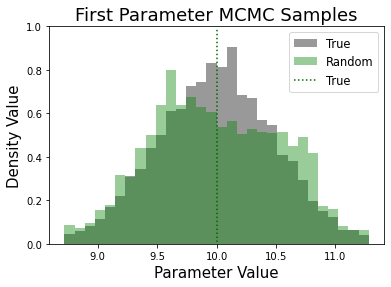}
\includegraphics[height=.235\textwidth]{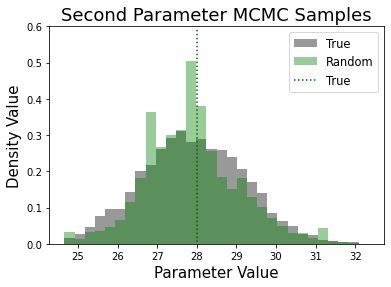}
\includegraphics[height=.235\textwidth]{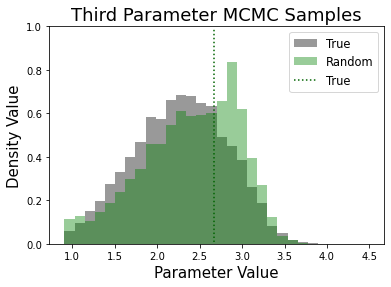}
\includegraphics[height=.235\textwidth]{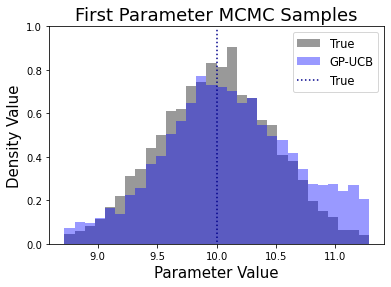}
\includegraphics[height=.235\textwidth]{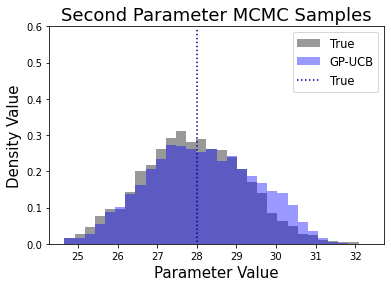}
\includegraphics[height=.235\textwidth]{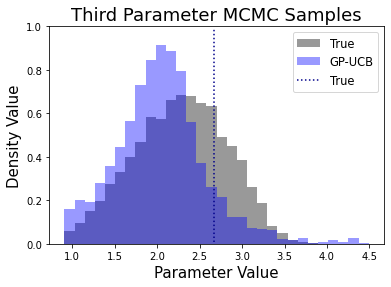}
\includegraphics[height=.235\textwidth]{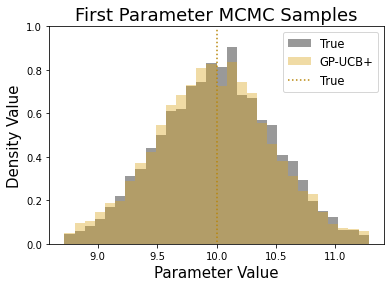}
\includegraphics[height=.235\textwidth]{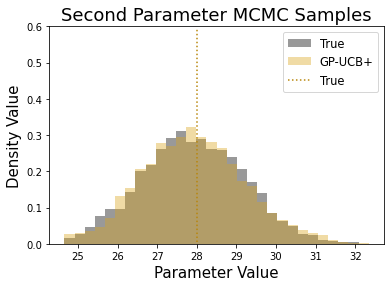}
\includegraphics[height=.235\textwidth]{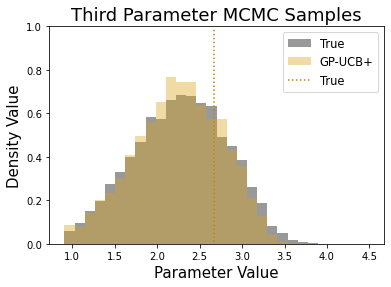}
\caption{MCMC samples for posterior inference of parameters in the Lorenz-63 system.}
\label{EXAMPLE_LORENZ}
\end{figure}

\begin{table}[htp]
\begin{center}
\begin{sc}
\begin{tabular}{lcr}
\toprule
Method & $\ell_2$-difference $\pm$ SD  \\
\midrule
EXPLOIT+  & $14.1004 \pm 1.3429$  \\
Random & $80.6782 \pm 54.2686$   \\
GP-UCB & $25.0256 \pm 0.7344$  \\
GP-UCB+  & $\mathbf{11.6711 \pm 1.4865} $   \\
\bottomrule
\end{tabular}
\end{sc}
\end{center}
\caption{\label{table:Lorenz_post_approx} Average $\ell_2$-difference between the true and approximate posterior densities.}
\label{Lorenz-table}
\end{table}

\section{Conclusion}\label{sec:conclusions}
This paper has introduced two Bayesian optimization algorithms, GP-UCB+ and EXPLOIT+, that supplement query points obtained via UCB or posterior mean maximization with query points obtained via random sampling. The additional sampling step in our algorithms promotes search space exploration and ensures that the fill-distance of the query points decays at a nearly optimal rate. From a theoretical viewpoint, we have shown that GP-UCB+ and EXPLOIT+ achieve a near-optimal convergence rate that improves upon existing and refined rates for the classical GP-UCB algorithm with noise-free observations.  Indeed, at the price of a higher computational cost, one can obtain the optimal convergence rate from \cite{bull2011convergence} as well as the convergence rate implied by the conjectured cumulative regret bound in \cite{vakili2022open} by replacing the random sampling step in GP-UCB+ and EXPLOIT+ with a quasi-uniform sampling scheme. From an implementation viewpoint, both GP-UCB+ and EXPLOIT+ retain the appealing simplicity of the GP-UCB algorithm; moreover, EXPLOIT+ does not require specifying input weight parameters. From an empirical viewpoint, we have demonstrated that the new algorithms outperform existing ones in a wide range of examples. This paper further proposes to use our new Bayesian optimization strategies as an experimental design tool to build surrogate posterior distributions for Bayesian inference tasks when the likelihood is intractable. The resulting posterior approximations are provably accurate
and perform well in several computed examples.


\bibliographystyle{siam}
\bibliography{references}

\begin{thebibliography}{10}

\bibitem{bansal2017goal}
{\sc S.~Bansal, R.~Calandra, T.~Xiao, S.~Levine, and C.~J. Tomlin}, {\em {Goal-driven dynamics learning via Bayesian optimization}}, in 2017 IEEE 56th Annual Conference on Decision and Control (CDC), IEEE, 2017, pp.~5168--5173.

\bibitem{bogunovic2021misspecified}
{\sc I.~Bogunovic and A.~Krause}, {\em {Misspecified Gaussian process bandit optimization}}, Advances in Neural Information Processing Systems, 34 (2021), pp.~3004--3015.

\bibitem{bull2011convergence}
{\sc A.~D. Bull}, {\em Convergence rates of efficient global optimization algorithms}, Journal of Machine Learning Research, 12 (2011).

\bibitem{burges1996improving}
{\sc C.~J. Burges and B.~Sch{\"o}lkopf}, {\em Improving the accuracy and speed of support vector machines}, Advances in Neural Information Processing Systems, 9 (1996).

\bibitem{chowdhury2017kernelized}
{\sc S.~R. Chowdhury and A.~Gopalan}, {\em On kernelized multi-armed bandits}, in International Conference on Machine Learning, PMLR, 2017, pp.~844--853.

\bibitem{ray2019bayesian}
\leavevmode\vrule height 2pt depth -1.6pt width 23pt, {\em Bayesian optimization under heavy-tailed payoffs}, Advances in Neural Information Processing Systems, 32 (2019).

\bibitem{clark2016engineering}
{\sc D.~L. Clark~Jr, H.-R. Bae, K.~Gobal, and R.~Penmetsa}, {\em Engineering design exploration using locally optimized covariance kriging}, AIAA Journal, 54 (2016), pp.~3160--3175.

\bibitem{cleary2021calibrate}
{\sc E.~Cleary, A.~Garbuno-Inigo, S.~Lan, T.~Schneider, and A.~M. Stuart}, {\em Calibrate, emulate, sample}, Journal of Computational Physics, 424 (2021), p.~109716.

\bibitem{de2012exponential}
{\sc N.~De~Freitas, A.~J. Smola, and M.~Zoghi}, {\em {Exponential regret bounds for Gaussian process bandits with deterministic observations}}, in Proceedings of the 29th International Coference on International Conference on Machine Learning, 2012, pp.~955--962.

\bibitem{frazier2018tutorial}
{\sc P.~I. Frazier}, {\em {A tutorial on Bayesian optimization}}, arXiv preprint arXiv:1807.02811,  (2018).

\bibitem{helin2023introduction}
{\sc T.~Helin, A.~M. Stuart, A.~L. Teckentrup, and K.~Zygalakis}, {\em {Introduction to Gaussian process regression in Bayesian inverse problems, with new results on experimental design for weighted error measures}}, arXiv preprint arXiv:2302.04518,  (2023).

\bibitem{jones1998efficient}
{\sc D.~R. Jones, M.~Schonlau, and W.~J. Welch}, {\em Efficient global optimization of expensive black-box functions}, Journal of Global optimization, 13 (1998), p.~455.

\bibitem{kanagawa2019convergence}
{\sc M.~Kanagawa and P.~Hennig}, {\em Convergence guarantees for adaptive bayesian quadrature methods}, Advances in neural information processing systems, 32 (2019).

\bibitem{kanagawa2018gaussian}
{\sc M.~Kanagawa, P.~Hennig, D.~Sejdinovic, and B.~K. Sriperumbudur}, {\em Gaussian processes and kernel methods: A review on connections and equivalences}, arXiv preprint arXiv:1807.02582,  (2018).

\bibitem{kandasamy2018parallelised}
{\sc K.~Kandasamy, A.~Krishnamurthy, J.~Schneider, and B.~P{\'o}czos}, {\em {Parallelised Bayesian optimisation via Thompson sampling}}, in International Conference on Artificial Intelligence and Statistics, PMLR, 2018, pp.~133--142.

\bibitem{kim2022optimization}
{\sc H.~Kim, D.~Sanz-Alonso, and R.~Yang}, {\em {Optimization on manifolds via graph Gaussian processes}}, arXiv preprint arXiv:2210.10962,  (2022).

\bibitem{lyu2019efficient}
{\sc Y.~Lyu, Y.~Yuan, and I.~W. Tsang}, {\em Efficient batch black-box optimization with deterministic regret bounds}, arXiv preprint arXiv:1905.10041,  (2019).

\bibitem{mockus1998application}
{\sc J.~Mockus}, {\em {The application of Bayesian methods for seeking the extremum}}, Towards Global Optimization, 2 (1998), p.~117.

\bibitem{novak2006deterministic}
{\sc E.~Novak}, {\em Deterministic and stochastic error bounds in numerical analysis}, vol.~1349, Springer, 2006.

\bibitem{oates2019convergence}
{\sc C.~J. Oates, J.~Cockayne, F.-X. Briol, and M.~Girolami}, {\em {Convergence rates for a class of estimators based on Stein’s method}}, Bernoulli, 25 (2019), pp.~1141--1159.

\bibitem{pace1997sparse}
{\sc R.~K. Pace and R.~Barry}, {\em Sparse spatial autoregressions}, Statistics \& Probability Letters, 33 (1997), pp.~291--297.

\bibitem{pourmohamad2020compmodels}
{\sc T.~Pourmohamad}, {\em Compmodels: Pseudo computer models for optimization}, R package version 0.2. 0,  (2020).

\bibitem{pourmohamad2021bayesian}
{\sc T.~Pourmohamad and H.~K. Lee}, {\em Bayesian Optimization with Application to Computer Experiments}, Springer, 2021.

\bibitem{russo2014learning}
{\sc D.~Russo and B.~Van~Roy}, {\em Learning to optimize via information-directed sampling}, Advances in Neural Information Processing Systems, 27 (2014).

\bibitem{salgia2023random}
{\sc S.~Salgia, S.~Vakili, and Q.~Zhao}, {\em {Random exploration in Bayesian Optimization: order-optimal regret and computational efficiency}}, arXiv preprint arXiv:2310.15351,  (2023).

\bibitem{schneider2017earth}
{\sc T.~Schneider, S.~Lan, A.~Stuart, and J.~Teixeira}, {\em Earth system modeling 2.0: A blueprint for models that learn from observations and targeted high-resolution simulations}, Geophysical Research Letters, 44 (2017), pp.~12396--12417.

\bibitem{siebertz2010statistische}
{\sc K.~Siebertz, T.~Hochkirchen, and D.~van Bebber}, {\em Statistische Versuchsplanung}, Springer, 2010.

\bibitem{singer2023alignment}
{\sc A.~Singer and R.~Yang}, {\em {Alignment of density maps in Wasserstein distance}}, Biological Imaging, 4 (2024), p.~e5.

\bibitem{srinivas2009gaussian}
{\sc N.~Srinivas, A.~Krause, S.~Kakade, and M.~Seeger}, {\em Gaussian process optimization in the bandit setting: no regret and experimental design}, in Proceedings of the 27th International Conference on Machine Learning, 2010, pp.~1015--1022.

\bibitem{stein2012interpolation}
{\sc M.~L. Stein}, {\em Interpolation of Spatial Data: Some Theory for Kriging}, Springer Science \& Business Media, 2012.

\bibitem{stuart2018posterior}
{\sc A.~Stuart and A.~Teckentrup}, {\em {Posterior consistency for Gaussian process approximations of Bayesian posterior distributions}}, Mathematics of Computation, 87 (2018), pp.~721--753.

\bibitem{teckentrup2020convergence}
{\sc A.~L. Teckentrup}, {\em {Convergence of Gaussian process regression with estimated hyper-parameters and applications in Bayesian inverse problems}}, SIAM/ASA Journal on Uncertainty Quantification, 8 (2020), pp.~1310--1337.

\bibitem{tuo2020kriging}
{\sc R.~Tuo and W.~Wang}, {\em {Kriging prediction with isotropic Mat{\'e}rn correlations: Robustness and experimental designs}}, Journal of Machine Learning Research, 21 (2020), pp.~7604--7641.

\bibitem{vakili2022open}
{\sc S.~Vakili}, {\em Open problem: Regret bounds for noise-free kernel-based bandits}, in Conference on Learning Theory, PMLR, 2022, pp.~5624--5629.

\bibitem{vakili2021information}
{\sc S.~Vakili, K.~Khezeli, and V.~Picheny}, {\em {On information gain and regret bounds in Gaussian process bandits}}, in International Conference on Artificial Intelligence and Statistics, PMLR, 2021, pp.~82--90.

\bibitem{wang2020prediction}
{\sc W.~Wang, R.~Tuo, and C.~Jeff~Wu}, {\em {On prediction properties of kriging: Uniform error bounds and robustness}}, Journal of the American Statistical Association, 115 (2020), pp.~920--930.

\bibitem{wendland2004scattered}
{\sc H.~Wendland}, {\em Scattered Data Approximation}, vol.~17, Cambridge University Press, 2004.

\bibitem{wenzel2021novel}
{\sc T.~Wenzel, G.~Santin, and B.~Haasdonk}, {\em A novel class of stabilized greedy kernel approximation algorithms: Convergence, stability and uniform point distribution}, Journal of Approximation Theory, 262 (2021), p.~105508.

\bibitem{williams2006gaussian}
{\sc C.~K. Williams and C.~E. Rasmussen}, {\em Gaussian Processes for Machine Learning}, vol.~2, MIT Press Cambridge, MA, 2006.

\bibitem{wu1993local}
{\sc Z.-m. Wu and R.~Schaback}, {\em Local error estimates for radial basis function interpolation of scattered data}, IMA Journal of Numerical Analysis, 13 (1993), pp.~13--27.

\end{thebibliography}

\appendix

\section{Proofs}

\begin{proof}[Proof of Theorem \ref{UCB_cum_reg_bd}]
Let $f^* = f(x^*) = \max_{x \in \mathcal{X}} f(x)$ and let $r_t = f^* - f(x_t)$ be the instantaneous regret. Then, 
\begin{align}
\begin{split}\label{eq:intantaneousboundGPUCB}
  r_t &= f^* - \mu_{t-1,0}(x^*) + \mu_{t-1,0}(x^*) - \mu_{t-1,0}(x_t) + \mu_{t-1,0}(x_t)- f(x_t)  \\
  &\overset{\text{(i)}}{\le} \|f\|_{\mathcal{H}_k(\mathcal{X})}\sigma_{t-1,0}(x^*) + \mu_{t-1,0}(x^*) - \mu_{t-1,0}(x_t) 
 + \|f\|_{\mathcal{H}_k(\mathcal{X})}\sigma_{t-1,0}(x_t)  \\
  & \overset{\text{(ii)}}{\le} \|f\|_{\mathcal{H}_k(\mathcal{X})}\sigma_{t-1,0}(x_t) + \mu_{t-1,0}(x_t) - \mu_{t-1,0}(x_t) +\|f\|_{\mathcal{H}_k(\mathcal{X})}\sigma_{t-1,0}(x_t) \\
  & = 2\|f\|_{\mathcal{H}_k(\mathcal{X})} \sigma_{t-1,0}(x_t),
  \end{split}
\end{align}
where for (i) we use twice that, for any $x \in \mathcal{X},$ it holds that $|f(x) - \mu_{t-1,0}(x)| \le \|f\|_{\mathcal{H}_k(\mathcal{X})}\sigma_{t-1,0}(x)$
---see for instance Corollary 3.11 in \citep{kanagawa2018gaussian}---
 and for (ii) we use the definition of $x_t$ in the GP-UCB algorithm. Thus, for any $\lambda >0,$ 
\begin{align*}
   R_T^2 &\overset{\text{(i)}}{\le} T \sum_{t=1}^T r_t^2
   \overset{\text{(ii)}}{\le} 4 T \|f\|_{\mathcal{H}_k(\mathcal{X})}^2  \sum_{t=1}^T \sigma^2_{t-1,0}(x_t) 
   \overset{\text{(iii)}}{\le} 4 T \|f\|_{\mathcal{H}_k(\mathcal{X})}^2  \sum_{t=1}^T \sigma^2_{t-1,\lambda}(x_t), 
\end{align*}
where (i) follows by the Cauchy-Schwarz inequality, (ii) from the bound on $r_t$, and (iii) from the fact that  $\sigma_{t-1,0}(x_t) \le \sigma_{t-1,\lambda}(x_t)$ for any $\lambda > 0$. Since the function $\frac{x}{\log(1+x)}$ is strictly increasing in $x$ and for the squared exponential kernel it holds that $\lambda^{-1}\sigma^2_{t-1,\lambda}(x_t) \le \lambda^{-1},$ we have that $\lambda^{-1}\sigma^2_{t-1,\lambda}(x_t) \le \frac{\lambda^{-1}}{\log(1+\lambda^{-1})}\log \bigl(1+\lambda^{-1}\sigma^2_{t-1,\lambda}(x_t)\bigr).$  Therefore, 
\begin{equation}\label{cum_reg_bound}
R_T^2 \le   \frac{8 T \|f\|_{\mathcal{H}_k(\mathcal{X})}^2 }{\log(1+\lambda^{-1})} \left( \frac{1}{2}\sum_{t=1}^T \log\Bigl(1 + \lambda^{-1}\sigma^2_{t-1, \lambda}(x_t)\Bigr)\right) \le \frac{8 T \|f\|_{\mathcal{H}_k(\mathcal{X})}^2 }{\log(1+\lambda^{-1})} \gamma_{T,\lambda},
\end{equation}
where the last inequality follows from  Lemma 5.3 in \cite{srinivas2009gaussian}. Since \eqref{cum_reg_bound} holds for any $\lambda > 0$, by plugging $\lambda = T^{-\alpha}$, for some $\alpha > 0$, we conclude that 
\begin{equation}\label{cum_reg_tight_bound}
R_T^2 \le \frac{8T \|f\|_{\mathcal{H}_k(\mathcal{X})}^2}{\log(1+T^{\alpha})} \gamma_{T,T^{-\alpha}}.
\end{equation} 

 For squared exponential kernels, Corollary 1 in \cite{vakili2021information} implies that
 \begin{align*}
\gamma_{T,T^{-\alpha}} &\le \left(\left(2(1+\alpha)\log T + \tilde C(d) \right)^d + 1 \right)\log\left(1 + T^{1+\alpha} \right) \lesssim \log^d (T) \log\left(1+T^{1+\alpha}\right),
\end{align*}
where $\tilde C(d) = \mathcal{O}(d \log d)$ is independent of $T$ and $\lambda.$
Hence, using that $\frac{\log(1+T^{\alpha+1})}{\log(1+T^\alpha)} \le \frac{\alpha+1}{\alpha}$ for $\alpha > 0, T \ge 1$, we obtain
\begin{align*}
R_T^2  \lesssim T \log^d (T) \frac{\log(1+T^{1+\alpha})}{\log(1+T^\alpha)} \lesssim T \log^d (T),
\end{align*}
concluding the proof. 
\end{proof}

\begin{proof}[Proof of Theorem \ref{thm:CumulativeRegretBounds}]
We first prove the cumulative regret bound for GP-UCB+. As in \eqref{eq:intantaneousboundGPUCB}, one can show that 
\begin{align*}
   r_t \le 2\|f\|_{\mathcal{H}_k(\mathcal{X})}\sigma^{\text{full}}_{t-1,0}(x_T) \le 2\|f\|_{\mathcal{H}_k(\mathcal{X})} \sup_{x \in \mathcal{X}}\sigma^{\text{full}}_{t-1,0}(x).
 \end{align*}
For Matérn kernels, for large $t \in \mathbb{N}$, \cite{wu1993local} shows that $\sup_{x \in \mathcal{X}}\sigma^{\text{full}}_{t-1,0}(x) \le h(\mathcal{X},  X^{\text{full}}_t)^\nu$ ---see also Lemma 2 in \cite{wang2020prediction}. Moreover, we have the trivial bound 
$$
h(\mathcal{X},  X^{\text{full}}_t) \le h_t(\mathcal{X}) \coloneqq \sup_{x \in \mathcal{X}} \inf_{ \tilde x_i \in \{\tilde x_1, \ldots, \tilde x_t\}} \|x- \tilde x_i\|.
$$ 
Hence, for any $\epsilon >0,$
\begin{align*}
    \mathbb{E}_P[r_t] \lesssim  \mathbb{E}_P \Bigl[ \sup_{x \in \mathcal{X}} \sigma^{\text{full}}_{t-1,0}(x) \Bigr] 
     \lesssim  \mathbb{E}_P \Bigl[ h_t(\mathcal{X})^\nu \Bigr]
     \overset{(\star)}{\lesssim} t^{- \frac{\nu}{d} + \varepsilon},
\end{align*}
where $(\star)$ follows from Proposition 4 in \cite{helin2023introduction} ---see also Lemma 2 in \cite{oates2019convergence}.

For squared exponential kernels, Theorem 11.22 in \cite{wendland2004scattered} shows that, for some $C>0,$ $\sup_{x \in \mathcal{X}}\sigma^{\text{full}}_{t-1,0}(x) \le \exp\bigl( - C / h(\mathcal{X},  X^{\text{full}}_t) \bigr).$ Hence, for any  $\epsilon \le \frac{1}{2d}$, 
\begin{align*}
       \mathbb{E}_P[r_t] 
    \lesssim \mathbb{E}_P \Bigl[ \sup_{x \in \mathcal{X}}\sigma^{\text{full}}_{t-1,0}(x) \Bigr] 
    \lesssim  \mathbb{E}_P \Bigl[\exp\bigl(-C/h_t(\mathcal{X})\bigr)\Bigr]
    \overset{(\star)}{\lesssim} \exp\left(-C t^{\frac{1}{d}-\varepsilon}\right),
\end{align*}
where $(\star)$ follows from Proposition 4 in \cite{helin2023introduction} ---see also Lemma 2 in \cite{oates2019convergence}.

For the EXPLOIT+ algorithm, we have that
\begin{align*}
   r_t & = f^* - f(x_t) \\
   &= f^* - \mu_{t-1,0}^{\text{full}}(x^*) + \mu_{t-1,0}^{\text{full}}(x^*) - \mu_{t-1,0}^{\text{full}}(x_t) + \mu_{t-1,0}^{\text{full}}(x_t) - f(x_t)  \\
    &\overset{\text{(i)}}{\le} \| f\|_{\mathcal{H}_k} \sigma_{t-1,0}^{\text{full}}(x^*) + \mu_{t-1,0}^{\text{full}}(x^*) - \mu_{t-1,0}^{\text{full}}(x_t) + \| f\|_{\mathcal{H}_k} \sigma_{t-1,0}^{\text{full}}(x_t) \\ 
   &\overset{\text{(ii)}}{\le} \|f\|_{\mathcal{H}_k(\mathcal{X})}\sigma^{\text{full}}_{t-1,0}(x^*) + \|f\|_{\mathcal{H}_k(\mathcal{X})}\sigma^{\text{full}}_{t-1,0}(x_t) \\
   &\le 2\|f\|_{\mathcal{H}_k(\mathcal{X})} \sup_{x \in \mathcal{X}}\sigma^{\text{full}}_{t-1,0}(x),
 \end{align*}
where for (i) we use twice that, for any $x \in \mathcal{X},$ it holds that $|f(x) - \mu_{t-1,0}(x)| \le \|f\|_{\mathcal{H}_k(\mathcal{X})}\sigma_{t-1,0}(x),$
 and for (ii) we use the definition of $x_t$ in the EXPLOIT+ algorithm. The rest of the proof proceeds exactly as the one for GP-UCB+, and we hence omit the details.
\end{proof}

\begin{proof}[Proof of Proposition \ref{prop:norm_const_bound}]
The bounds follow from the continuity of $V$ and the compactness of $\mathcal{X}$. Specifically, we have
\begin{align*}
   Z_{\pi, t} &= \int_{\mathcal{X}}\exp\left(\mu_t^V(x)\right)dx \le \text{vol}(\mathcal{X}) \sup_{x \in \mathcal{X}} \exp \left( \mu_t^V(x) \right) \lesssim \sup_{x \in \mathcal{X}} \exp \left(V(x) + \|V\|_{\mathcal{H}_k}\right) \lesssim \sup_{x \in \mathcal{X}}\exp \left(V(x) \right)< \infty, \\
    Z_{\pi, t} &= \int_{\mathcal{X}}\exp\left(\mu_t^V(x)\right)dx \ge \text{vol}(\mathcal{X}) \inf_{x \in \mathcal{X}} \exp \left( \mu_t^V(x) \right) \gtrsim \inf_{x \in \mathcal{X}} \exp \left( V(x)- \|V\|_{\mathcal{H}_k}\right) \gtrsim \inf_{x \in \mathcal{X}} \exp \left( V(x)\right) > 0,
\end{align*}
where we have used Corollary 3.11 in \cite{kanagawa2018gaussian} and the fact that $|k(x,x)| \le 1$ for Mat\'ern and squared exponential kernels.
\end{proof}

\begin{proof}[Proof of Theorem \ref{thm:pos_approx_log_unnorm}]
Notice that, for $\sqrt{\tilde \pi_t(x)} \coloneqq \frac{1}{Z_{\pi, t}}\exp\left(V(x) \right)$
\begin{align*}
    2d^2_{H}(\pi, \pi_t) &= \int_{\mathcal{X}}\left(\sqrt{\pi(x)}-\sqrt{\pi_t(x)}\right)^2 dx \\
     &= \int_{\mathcal{X}}\left(\sqrt{\pi(x)}-\sqrt{\tilde \pi_t(x)}+\sqrt{\tilde \pi_t(x)}-\sqrt{\pi_t(x)}\right)^2 dx \\
    &\le 2  \int_{\mathcal{X}}\left(\sqrt{\pi(x)}-\sqrt{\tilde \pi_t(x)}\right)^2 dx + 2\int_{\mathcal{X}}\left(\sqrt{\tilde\pi_t(x)}-\sqrt{\pi_t(x)}\right)^2 dx  \\ 
    & = \frac{2}{Z} \int_{\mathcal{X}} \left(\exp\left(\frac{V(x)}{2}\right) - \exp\left(\frac{\mu_t^V(x)}{2} \right) \right)^2 dx + 2 \left(\frac{1}{\sqrt{Z_\pi}} - \frac{1}{\sqrt{Z_{\pi, t}}} \right)^2 Z_{\pi, t},
\end{align*}
where in the third line we used the fact that $(a+b)^2 \le 2a^2 + 2b^2$. Note that for the first term, thanks to the continuity of $V, \mu_t^V$ and compactness of $\mathcal{X}$, we have the local Lipschitzness of the exponential function. Therefore, we have
$$
\frac{2}{Z} \int_{\mathcal{X}} \left(\exp\left(\frac{V(x)}{2}\right) - \exp\left(\frac{\mu_t^V(x)}{2} \right) \right)^2 dx
 \lesssim \int |V(x) - \mu_t^V(x)|^2 dx = \|V-\mu_t^V\|^2_{L^2(\mathcal{X})}.
$$
Next, using the mean value theorem on $g(x) = x^{-1/2}$ with boundedness of $Z_{\pi, t}$, we have
\begin{align*}
    \left(\frac{1}{\sqrt{Z_\pi}} - \frac{1}{\sqrt{Z_{\pi, t}}} \right)^2 &\lesssim \left|Z_\pi - Z_{\pi, t}\right|^2\\
    &= \left(\int_{\mathcal{X}} \exp\bigl(V(x)\bigr) - \exp\left(\mu^V_t(x)\right) dx \right)^2 \\
    &\le \int_{\mathcal{X}}  \left(\exp\bigl(V(x)\bigr) - \exp\left(\mu^V_t(x)\right)\right)^2 dx \\
    &\lesssim \|V - \mu_t^V\|^2_{L^2(\mathcal{X})},
\end{align*}
where we used Jensen's inequality in the third line and the fourth line is due to the local Lipschitzness. Combining both terms, we have $d_H(\pi, \pi_t) \lesssim \|V - \mu_t^V\|^2_{L^2(\mathcal{X})}$. 

For Mat\'ern kernels, from Proposition 3.3 of \cite{stuart2018posterior}, we have
$$
d_H(\pi, \pi_t) \lesssim \|V - \mu_t^V\|_{L^2(\mathcal{X})} \lesssim h\left(\mathcal{X}, \mathcal{X}_t^{\text{full}}\right)^{\nu+\frac{d}{2}}\lesssim h\left(\mathcal{X}\right)^{\nu+\frac{d}{2}},
$$
where $h_t(\mathcal{X}) \coloneqq \sup_{x \in \mathcal{X}} \inf_{ \tilde x_i \in \{\tilde x_1, \ldots, \tilde x_t\}} \|x- \tilde x_i\|$. Applying Proposition 4 in \cite{helin2023introduction} -- see also Lemma 2 in \cite{oates2019convergence}, we have
$$
\mathbb{E}_P \left[d_H(\pi, \pi_t) \right] \lesssim \mathbb{E}_P \left[h\left(\mathcal{X}\right)^{\nu+\frac{d}{2}} \right] \lesssim t^{-\frac{\nu}{d}-\frac{1}{2} + \epsilon},
$$
for arbitrarily small $\epsilon > 0$.

For squared exponential kernels, from Theorem 11.22 of \cite{wendland2004scattered}, we have
$$
d_H(\pi, \pi_t) \lesssim \|V - \mu_t^V\|_{L^2(\mathcal{X})} \lesssim \exp\left(-C/h\left(\mathcal{X}, \mathcal{X}_t^{\text{full}}\right) \right) \lesssim \exp\left(-C/h\left(\mathcal{X}\right) \right),
$$
for some constant $C > 0$. Applying Proposition 4 in \cite{helin2023introduction} -- see also Lemma 2 in \cite{oates2019convergence}, we have
$$
\mathbb{E}_P \left[d_H(\pi, \pi_t) \right] \lesssim \mathbb{E}_P \left[\exp\left(-C/h\left(\mathcal{X}\right) \right) \right] \lesssim \exp \left(-C t^{\frac{1}{d}-\epsilon}\right) ,
$$
for some constant $C > 0$, with an arbitrarily small $\epsilon > 0$.
\end{proof}

\section{Additional Experiments and Implementation Details: Benchmark Functions}
\label{appendix:BENCHMARK}
This appendix provides detailed descriptions of the numerical experiments conducted in Section \ref{subsec:BENCH}. The functional forms of the three objective functions we considered and their respective search space are provided below. For all three benchmark functions we denote  $x = (x^1, \ldots, x^d)$ and set $d = 10.$
\begin{itemize}
    \item Ackley function: 
    $$
        f(x) = -20 \exp\left(-\frac{1}{5}\sqrt{\frac{1}{d}\sum_{i=1}^d (x^i)^2} \right) - \exp\left(\frac{1}{d}\sum_{i=1}^d \cos(2\pi x^i) \right) + 20 + \exp(1), \quad x \in [-32.768, 32.768]^d.
    $$
    \item Rastrigin function:
    $$
        f(x) = 10d + \sum_{i=1}^d [(x^i)^2 - 10 \cos(2\pi x^i)], \quad x \in [-5.12, 5.12]^d.
    $$
    \item Levy function: With $\omega_i = 1 + \frac{x^i-1}{4}$, for all $i \in \{1, \ldots, d\}$ 
    $$
        f(x) = \sin^2(\pi \omega_1) + \sum_{i=1}^{d-1}(\omega_i -1)^2[1+10\sin^2(\pi\omega_i + 1)] + (\omega_d-1)^2[1+\sin^2(2\pi \omega_d)], \quad x \in [-10, 10]^d.
    $$
\end{itemize}

\begin{table}[H]
\vskip 0.15in
\begin{center}
\begin{sc}
\begin{tabular}{lcccr}
\toprule
Method & Ackley & Rastrigin & Levy  \\
\midrule

GP-UCB+ & \bf{0.075} & 0.797 & \bf{0.131} \\
GP-UCB & 1.000 & 1.000 & 0.719  \\
EXPLOIT+ & 0.306 & 0.577 & 0.127  \\
EXPLOIT & 0.733 & 0.976 & 1.000   \\
EI   & 0.466 & 0.609 & 0.160 \\
PI & 0.329 & \bf{0.360} & 0.312  \\
\bottomrule
\end{tabular}
\end{sc}
\end{center}
\caption{\label{table:benchmarksstd} Normalized average standard deviation of simple regret with 400 function evaluations for different benchmark objectives in dimension $d = 10.$}
\end{table}

Recall that Figure \ref{Benchmark_SIMPLE_PLOT} portrayed the average  simple regret of the six Bayesian optimization strategies we consider: GP-UCB+ (proposed algorithm), GP-UCB (\cite{srinivas2009gaussian}) (both with the choice of $\beta_t = 2$), EXPLOIT+ (proposed algorithm), EXPLOIT (GP-UCB with $\beta_t = 0$), EI (Expected Improvement), and PI (Probability of Improvement). The simple regret values at the last iteration were displayed in Table \ref{table:benchmarks}. Furthermore, Table \ref{table:benchmarksstd} shows the standard deviations of the last simple regret values over 20 independent experiments. From Figure \ref{Benchmark_SIMPLE_PLOT} and Table \ref{table:benchmarksstd}, one can see that not only were the proposed methods (GP-UCB+ and EXPLOIT+) able to yield superior simple regret performance, but also their standard deviations were substantially smaller than those of the other methods, indicating superior stability.

 Additionally, Figure \ref{Benchmark_beta_CUM_PLOT} shows the cumulative regret for GP-UCB+ and GP-UCB with different choices of $\beta_t$.  All results were averaged over 20 independent experiments. We considered $\beta_t^{1/2} = 2$ and $\beta_t^{1/2} = \max_{x \in \mathcal{X}_D} |f(x)|$ where $\mathcal{X}_D$ is a set of 100 Latin hypercube samples. In all experiments, $\max_{x \in \mathcal{X}_D} |f(x)|$ was significantly larger than $2$. Figure \ref{Benchmark_beta_CUM_PLOT} demonstrates that the choice of $\beta_t$ can significantly influence the cumulative regret. In particular, we have observed that the GP-UCB+ algorithm tends to work better with smaller $\beta_t$ values, as the algorithm contains additional exploration steps through random sampling; this behavior can also be seen in Figure \ref{Benchmark_beta_CUM_PLOT}.  In all three benchmark functions, GP-UCB exhibits sensitivity to the choice of parameter $\beta_t$; in contrast, our EXPLOIT+ algorithm does not require specifying weight parameters and consistently achieves competitive or improved performance across all our experiments. 

\begin{figure}[htp]
  \centering
\includegraphics[scale=0.345]{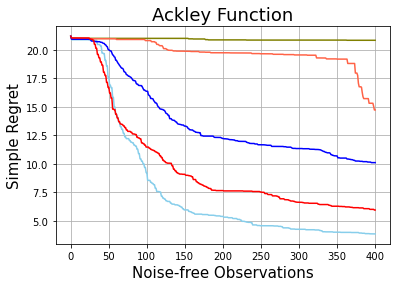}
\includegraphics[scale=0.345]{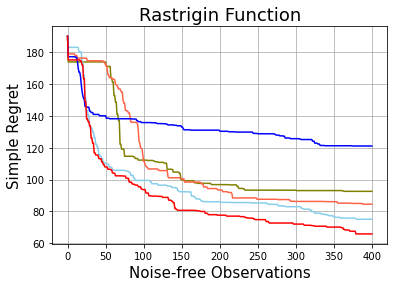}
\includegraphics[scale=0.345]{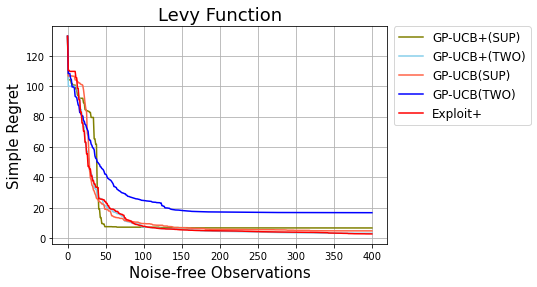}
\caption{Simple regret plots for benchmark functions with $\beta_t^{1/2} = 2$ (TWO) and  $\beta_t^{1/2} = \max_{x\in \mathcal{X}_D } |f(x)|$ (SUP).}
  \label{Benchmark_beta_CUM_PLOT}
\end{figure}

\section{Additional Figures and Implementation Details: Hyperparameter Tuning}\label{appendix:RF}
To train the random forest regression model for California housing dataset \cite{pace1997sparse}, we first divided the dataset into test and train datasets. 80 percent of (feature vector, response) pairs were assigned to be the training set, while the remaining 20 percent were treated as a test set. In constructing the deterministic objective function, we defined it to be a mapping from the vector of four hyperparameters to a negative test error of the model built based on the input and training set. As the model construction may involve randomness coming from the bootstrapped samples, we fixed the random state parameter to remove any such randomness in the definition of the objective. We tuned the following four hyperparameters:
\begin{itemize}
    \item Number of trees in the forest $\in [10, 200].$
    \item Maximum depth of the tree $\in [1, 20].$
    \item Minimum  number of samples requires to split the internal node $\in [2, 10].$
    \item Maximum proportion of the number of features to consider when looking for the best split $\in [0.1, 0.999].$
\end{itemize}
For the first three parameters we conducted the optimization task in the continuous domain and rounded down to the nearest integers. Figure \ref{EXAMPLE_RF_APPEND_CAL_HOUSE} shows that the proposed algorithms attained smaller cumulative and instantaneous test errors. 

\begin{figure}[H]
    \centering
\includegraphics[height=4cm]{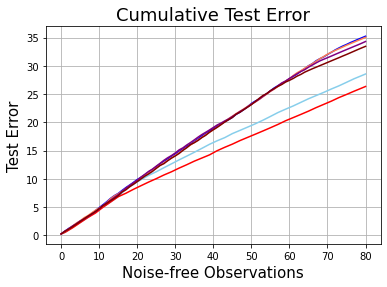}
\includegraphics[height=4cm]{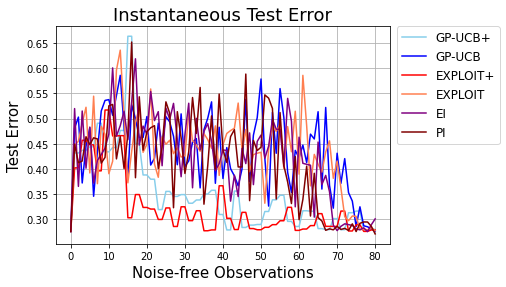}
  \caption{Test errors vs number of noise-free observations.}
    \label{EXAMPLE_RF_APPEND_CAL_HOUSE}
\end{figure}

\section{Implementation Details: Garden Sprinkler Computer Model}\label{appendix:GARDEN}
For the Garden Sprinkler computer model, the eight-dimensional search space we considered was given by:
\begin{itemize}
    \item Vertical nozzle angle $\in [0, 90].$ 
    \item Tangential nozzle angle $\in [0, 90].$ 
    \item Nozzle profile $\in [2 \times 10^{-6}, 4 \times 10^{-6}].$ 
    \item Diameter of the sprinkler head $\in [0.1, 0.2].$ 
    \item Dynamic friction moment $\in [0.01, 0.02].$ 
    \item Static friction moment $\in [0.01, 0.02].$ 
    \item Entrance pressure $\in [1, 2].$ 
    \item Diameter flow line $\in [5, 10].$ 
\end{itemize}

\section{Implementation Details: Bayesian Inference for Parameters of Differential Equations}\label{appendix:BI}
The number of initial design points, the step-size parameter for the random walk Metropolis-Hastings algorithm, and the choice of numerical ODE solver were given by:
\begin{itemize}
    \item Total number of function evaluations: 20 / Size of initial design points: 2 (Rossler)
    \item Total number of function evaluations: 400 / Size of initial design points: 20 (Lorenz 63)
    \item Perturbation kernel for the random walk Metropolis-Hastings: $\mathcal{N}(0, 0.3 I_{3\times3})$ (Lorenz 63). \\ This choice roughly leads to an acceptance rate of 50 percent.
    \item ODE solvers: RK45 implemented through $\texttt{solve\_ivp}$ function in Python's scipy package. (Both Rossler and Lorenz 63.)
\end{itemize}

\end{document}